\newtheoremstyle{colon}
{0pt} % 上方空间
{0pt} % 下方空间
{\normalfont} % 正文字体（普通字体）
{0em} % 缩进
{\bfseries} % 标题字体
{} % 标题后面的标点符号（冒号）
{0.5em} % 与标题后面内容的间距
{\thmname{#1}\thmnumber{ #2}\thmnote{ (#3)}} 
\theoremstyle{colon}
\newtheorem{theorem}{Theorem}
\newtheorem{corollary}{Corollary}
\newtheorem{lemma}{Lemma}
\newtheorem{definition}{Definition}
\begin{document}

\title{An Experimental Approach for Running-Time Estimation of Multi-objective Evolutionary Algorithms in Numerical Optimization}

\author{Han Huang, Tianyu Wang, Chaoda Peng*, Tongli He, Zhifeng Hao}

%\markboth{Journal of \LaTeX\ Class Files,~Vol.~14, No.~8, August~2021}%
%{Shell \MakeLowercase{\textit{et al.}}: A Sample Article Using IEEEtran.cls for IEEE Journals}%

\maketitle

\begin{abstract}
Multi-objective evolutionary algorithms (MOEAs) have become essential tools for solving multi-objective optimization problems (MOPs), making their running time analysis crucial for assessing algorithmic efficiency and guiding practical applications. While significant theoretical advances have been achieved for combinatorial optimization, existing studies for numerical optimization primarily rely on algorithmic or problem simplifications, limiting their applicability to real-world scenarios. To address this gap, we propose an experimental approach for estimating upper bounds on the running time of MOEAs in numerical optimization without simplification assumptions. Our approach employs an average gain model that characterizes algorithmic progress through the Inverted Generational Distance metric. To handle the stochastic nature of MOEAs, we use statistical methods to estimate the probabilistic distribution of gains. Recognizing that gain distributions in numerical optimization exhibit irregular patterns with varying densities across different regions, we introduce an adaptive sampling method that dynamically adjusts sampling density to ensure accurate surface fitting for running time estimation. We conduct comprehensive experiments on five representative MOEAs (NSGA-II, MOEA/D, AR-MOEA, AGEMOEA-II, and PREA) using the ZDT and DTLZ benchmark suites. The results demonstrate the effectiveness of our approach in estimating upper bounds on the running time without requiring algorithmic or problem simplifications. Additionally, we provide a web-based implementation to facilitate broader adoption of our methodology. This work provides a practical complement to theoretical research on MOEAs in numerical optimization.

\end{abstract}

\begin{IEEEkeywords}
Average gain model, numerical optimization, experimental method, multi-objective evolutionary algorithms(MOEAs), running-time estimation.
\end{IEEEkeywords}

\section{Introduction}\label{introduction}
\IEEEPARstart{M}{ulti-objective} evolutionary algorithms (MOEAs) play a pivotal role in solving multi-objective optimization problems (MOPs). With the growing complexity of real-world applications, evaluating the computational efficiency of MOEAs has become increasingly critical for practitioners who need to allocate computational resources effectively and select appropriate algorithms for specific problem domains \cite{coello2007evolutionary, zhou2011multiobjective}. Running time analysis provides essential insights for algorithm selection, parameter tuning, and resource planning in practical deployments, particularly in scenarios where computational budgets are limited or optimization campaigns must meet strict time constraints \cite{von2019overview, okoth2022large}. Such analysis enables researchers to evaluate performance differences among different algorithms and provides essential guidance for practical implementations, hence improving the algorithm's performance and adaptability to a wide range of application scenarios \cite{doerr2021theoretical, belaiche2021parallel}.

There are two fundamental concepts for evaluating the running time of evolutionary algorithms: the first hitting time (FHT) and the expected first hitting time (EFHT). FHT represents the number of generations for MOEAs to first reach the target value \cite{he2016average}, while EFHT indicates the average number of generations required, thereby characterizing the average-case computational complexity of MOEAs \cite{yu2008new}. To analyze these quantities, researchers have developed sophisticated theoretical tools that provide rigorous mathematical foundations. Drift analysis \cite{he2004study,he2016average} characterizes the expected progress toward optimization targets by analyzing the expected reduction in distance to the target over generations, while switch analysis \cite{yu2014switch} determines an algorithm's time complexity through comparison with a reference algorithm of known complexity. 

While these theoretical tools provide mathematical foundations, their application to real-world MOEAs presents significant practical challenges. The inherent stochasticity of MOEAs and the complexity of their evolutionary operators, such as crossover, mutation, and selection mechanisms (e.g., selection pressure, diversity maintenance strategies), create substantial analytical complexity. These challenges primarily arise from two interrelated aspects: the stochastic nature of evolutionary processes makes it difficult to derive precise probability distributions of fitness improvements, and the complex interactions among different operators resist straightforward mathematical characterization, making it hard to quantify their combined impact on convergence behavior \cite{huang2019experimental}.

To address these analytical complexities, the research community has adopted systematic simplification strategies that enable rigorous mathematical analysis while maintaining essential algorithmic characteristics. These approaches can be categorized into three methodological progressions based on their abstraction levels. The first category employs simplifications in both algorithmic mechanisms and problem settings, where researchers analyze synthetic MOEAs with simplified operators on carefully constructed test problems with well-defined mathematical properties \cite{laumanns2002running,laumanns2004running,chen2014runtime}. This approach provides fundamental theoretical insights into evolutionary dynamics. The second category maintains realistic problem settings while employing synthetic MOEAs with mathematically tractable structures \cite{neumann2007expected,gao2019runtime,qian2023can}, enabling the study of algorithm behavior under practical problem constraints. The third category analyzes practical algorithms such as NSGA-II \cite{deb2002fast} but incorporates certain operator simplifications, typically utilizing basic genetic operators such as single-point or multi-point crossover on binary strings and bit flip mutation \cite{zheng2022first,li2015primary,huang2019running,huang2020runtime}. While these methodological approaches have advanced our theoretical understanding significantly, the majority of results focus on combinatorial optimization problems, with numerical multi-objective optimization remaining a relatively underexplored area \cite{chen2014runtime}.

Recent work has demonstrated promising directions for analyzing running time without algorithmic or problem simplifications. Notably, Huang et al. \cite{huang2019experimental} developed a fitness-difference drift model called the average gain model for estimating the running time of EAs in numerical optimization problems. However, this methodology remains limited to single-objective optimization scenarios. Extending this approach to multi-objective cases is highly desirable for two primary reasons. First, MOPs are ubiquitous across various domains and predominantly manifest as numerical optimization problems \cite{zhan2022survey}, making running time estimation methods for multi-objective cases of paramount importance. Multi-objective optimization introduces substantial complexity as it requires tracking convergence toward entire Pareto fronts rather than single optimal points, necessitating fundamentally different progress measurement and convergence criteria. Second, the intricate interactions between evolutionary operators in multi-objective settings make rigorous theoretical analysis particularly challenging in continuous domains. Therefore, experimental approaches serve as essential complementary tools, providing reliable running time estimates when theoretical methods become intractable due to these analytical complexities \cite{huang2019experimental}.

We propose an experimental approach for estimating upper bounds on the EFHT of practical MOEAs in numerical optimization problems. We focus on upper bound estimation as it provides essential performance guarantees for practical deployment. Our approach employs an average gain model where the gain is defined as the difference between consecutive IGD \cite{van1998evolutionary} values (see Definition \ref{def:2}) to quantify algorithmic progress. Given the stochasticity of MOEAs and complexity of evolutionary operators, deriving the gain's probabilistic distribution analytically is challenging. To cope with this challenge, we introduce an adaptive sampling method to empirically estimate this distribution, enabling upper bound calculation through established drift analysis principles.
The main contributions of this paper are summarized as follows:
\begin{itemize}
	\item We design an experimental approach using adaptive sampling to estimate upper bounds on EFHT of MOEAs for numerical MOPs without relying on algorithmic or problem simplifications. The framework uses IGD-based gain measurements and adaptive sampling to enhance surface fitting accuracy, offering a complement to theoretical analysis in scenarios where rigorous mathematical approaches face complexity barriers.
	
	\item We validate the proposed approach on five representative MOEAs using ZDT and DTLZ benchmark suites. The experimental results show that the estimated upper bounds maintain consistent relationships with empirical running times across tested algorithms and problems, indicating that the framework can provide useful approximations for algorithm comparison in similar optimization scenarios.
	
	\item We provide a web-based implementation of our framework\footnote{The experimental results in this paper were obtained using the system available at \url{http://www.eatimecomplexity.net/}. By uploading the sample data files of the average gain, the system automatically performs surface fitting, deriving an estimated running time result for the algorithm. The language can be switched to English in the top-right corner of the page.}, enabling researchers and practitioners to easily apply our methodology to their own MOEAs and optimization problems, thereby facilitating broader adoption of upper bound estimation techniques in real-world scenarios.
\end{itemize}
The remainder of this paper is organized as follows. Section \ref{related_work} provides a comprehensive review of running time analysis in MOEAs. Section \ref{avg model} introduces the theoretical foundation of average gain in MOEAs. In Section \ref{Approach}, we present a detailed experimental methodology, including an adaptive sampling method for surface fitting. Section \ref{Experiment} presents the experimental studies and discusses the results in depth. Finally, Section \ref{conclusion} draws a conclusion of this paper.

\section{Related Work}\label{related_work}
Research on running time analysis of MOEAs has made substantial progress over the past decades \cite{laumanns2002running,laumanns2004running,chen2014runtime,neumann2007expected,gao2019runtime,zheng2022first,li2015primary,huang2019running,huang2020runtime,bian2018general,friedrich2010plateaus,giel2006effect,friedrich2011illustration,neumann2012computational,doerr2013lower,doerr2016runtime,qian2013analysis,osuna2017speeding,horoba2009analysis,neumann2010crossover,huang2021runtime,qian2023can,10056396,10266760,10663481,10454586}. However, the majority of existing research focuses on combinatorial optimization problems.
As discussed in Section \ref{introduction}, existing studies can be categorized into three approaches: studies analyzing synthetic MOEAs on synthetic problems, studies using synthetic MOEAs on realistic problems, and studies employing practical algorithms with simplified operators.

The first category achieves theoretical tractability through comprehensive simplifications in both algorithmic mechanisms and problem settings. Laumanns \textit{et al}. \cite{laumanns2002running} established the foundation of population-based MOEA analysis by introducing synthetic algorithms SEMO and FEMO that employ basic operators such as bit flip mutation on the LOTZ problem. This work was extended through general analytical tools including upper-bound techniques based on decision space partitioning and randomized graph search algorithms \cite{laumanns2004running}, enabling analysis of various synthetic algorithms on combinatorial problems such as LOTZ, COCZ, mLOTZ, and mCOCZ. Chen \textit{et al}. \cite{chen2014runtime} further contributed to this category by analyzing $(1+\mu)$ MOEA on a simple continuous two-objective optimization problem (SCTOP), proving that the algorithm can find the first Pareto optimal solution in expected runtime $\mathcal{O}(\sigma^2)$. These foundational studies established essential theoretical frameworks that enable mathematical analysis of evolutionary multi-objective optimization.

The second category employs synthetic algorithms on realistic optimization problems, balancing theoretical tractability with practical relevance. A significant advancement was achieved through GSEMO, an extension of SEMO with more versatile mutation operators that enables analysis on practical problems such as the bi-objective minimum spanning tree problem \cite{neumann2007expected}. Neumann \textit{et al}. proved that GSEMO achieves a 2-approximation of the Pareto front within expected pseudo-polynomial time while remaining mathematically tractable. This approach has been successfully extended to multi-objective spanning tree optimization \cite{gao2019runtime} and multi-objective shortest path problems, where polynomial-time approximation guarantees have been established \cite{horoba2009analysis,neumann2010crossover}. These studies demonstrate the value of maintaining realistic problem settings while using simplified algorithmic frameworks for theoretical analysis.

The third category focuses on practical algorithms while employing operator simplifications to maintain analytical tractability. The first mathematical runtime analysis of NSGA-II was conducted by Zheng \textit{et al}. \cite{zheng2022first}, analyzing its performance on OneMinMax and LOTZ benchmarks using simplified sorting procedures without crossover operations. Subsequent work by Bian and Qian proved that NSGA-II's expected running time for LOTZ is $O(n^3)$ \cite{bian2022better}, while Lu \textit{et al}. demonstrated faster convergence for interactive NSGA-II variants \cite{Lu_Bian_Qian_2024}. In parallel, Li \textit{et al}. \cite{li2015primary} pioneered runtime analysis of the MOEA/D framework using mutation as the sole offspring generation mechanism. This work was later extended by Huang \textit{et al}. to include adaptive operator selection \cite{huang2019running,huang2020runtime}. Recent advances include polynomial acceleration results for archive-based MOEAs \cite{ijcai2024p763} and randomized population update strategies \cite{ijcai2023p612}, as well as approximation ratio analysis for MAP-Elites on NP-hard problems \cite{ijcai2024p773}. These studies represent significant progress toward analyzing practical algorithms, primarily on well-defined benchmark problems.

The majority of existing running time analysis research focuses on combinatorial optimization problems, with very limited theoretical results addressing numerical multi-objective optimization \cite{chen2014runtime}. Moreover, running time analysis of MOEAs in continuous domains typically requires algorithmic or problem simplifications to maintain analytical tractability \cite{huang2019experimental}. While MOEAs have achieved remarkable success in numerical optimization applications, the theoretical understanding of their running time behavior in these domains remains an active area for further development. This gap presents opportunities for complementary approaches that can provide practical insights into algorithm performance without requiring extensive simplifications.

\section{Average IGD Gain Model of MOEAs}\label{avg model}
In this section, we establish the theoretical foundation for our experimental approach to estimating the upper bounds on EFHT of MOEAs. We begin by reviewing the fundamental concepts of multi-objective optimization and the average gain model originally developed for single-objective scenarios. Subsequently, we extend this theoretical framework to multi-objective cases by introducing an IGD-based gain measurement mechanism, which captures the progress of MOEAs toward the entire Pareto front. Finally, we present a theoretical result that enables upper bound estimation based on the average IGD gain, providing the mathematical foundation for our subsequent experimental methodology.

\subsection{Preliminaries}

An MOP can be formally described as
\begin{equation}\label{eq1}
	\begin{aligned}
		&\min \;\; \boldsymbol{F}(\boldsymbol{x}) = \bigl(f_{1}(\boldsymbol{x}), f_{2}(\boldsymbol{x}), \ldots, f_{m}(\boldsymbol{x})\bigr)^T,
	\end{aligned}
\end{equation}
where $m \geq 2$ represents the number of objectives, $\Omega$ denotes the decision space, $\boldsymbol{x} \in \Omega$ is a decision vector, and $\boldsymbol{F}(\boldsymbol{x}): \Omega \rightarrow \mathbb{R}^m$ is the objective vector function mapping from the decision space to the $m$-dimensional objective space.

In multi-objective optimization, the concept of optimality is generalized through Pareto dominance \cite{miettinen1999nonlinear}, which enables comprehensive solution comparison across all objectives simultaneously.

\begin{definition}[Pareto Dominance]
	Given two solutions \(\boldsymbol{x}_a, \boldsymbol{x}_b \in \Omega\), we say that \(\boldsymbol{x}_a\) Pareto dominates \(\boldsymbol{x}_b\), denoted by \(\boldsymbol{x}_a \prec \boldsymbol{x}_b\), if and only if
	$
	\forall i \in \{1, \ldots, m\},\ f_i(\boldsymbol{x}_a) \leq f_i(\boldsymbol{x}_b) \quad $ and $ \quad \exists j \in \{1, \ldots, m\},\ f_j(\boldsymbol{x}_a) < f_j(\boldsymbol{x}_b).
	$
\end{definition}

The Pareto dominance relation establishes a partial ordering in the objective space, enabling the identification of non-dominated solutions.

\begin{definition}[Pareto Optimal Set]
	A decision vector $\boldsymbol{x}^* \in \Omega$ is Pareto optimal if no other solution in the decision space dominates it. Formally, the Pareto optimal set is defined as
	\begin{equation}
		\text{PS} = \bigl\{\boldsymbol{x} \in \Omega \mid \nexists\,\boldsymbol{y} \in \Omega \text{ such that } \boldsymbol{y} \prec \boldsymbol{x}\bigr\}.
	\end{equation}
\end{definition}

\begin{definition}[Pareto Front]
	The Pareto front is the image of the Pareto optimal set in the objective space, defined as
	\begin{equation}
		\text{PF} = \bigl\{\boldsymbol{F}(\boldsymbol{x}) \mid \boldsymbol{x} \in \text{PS}\bigr\}.
	\end{equation}
\end{definition}

\subsection{Average Gain Theory}

Inspired by the concept of pointwise drift \cite{lengler2020drift}, the average gain model serves as a theoretical tool for analyzing the runtime of EAs in continuous solution spaces \cite{huang2014runtime,yushan2016first}. This model was initially proposed by Huang \textit{et al}. \cite{huang2014runtime} for analyzing the (1+1) EA in continuous domains. Zhang \textit{et al}. \cite{yushan2016first} subsequently generalized the model by decoupling it from specific algorithms and objective functions, transforming it into a general analytical framework for studying EA runtime from an abstract perspective. More recently, Huang \textit{et al}. \cite{huang2019experimental} adapted the model for practical applications in single-objective numerical optimization, demonstrating its utility beyond theoretical analysis.

Let $\mathit{{\bar P}_{t}=\left\{p_{1}, p_{2}, \ldots, p_{|{\bar P}_{t}|}\right\}}$ represent the offspring population at generation $\mathit{t}$, with $|{\bar P}_{t}|$ denoting the population size. In the domain of single-objective optimization, the definition of the fitness difference is as follows:

\begin{equation}\label{eq4}
d(p_{i})=\max \left\{0, f(p_{i})-f^{\prime}\right\},
\end{equation}
where $i \in \{1, \ldots, |{\bar P}_{t}|\}$, $\mathit{p_{i}}$ is the $i$-th solution in $\bar P_t$, $\mathit{f(p_{i})}$ is the fitness value of the current solution, and $\mathit{f^{\prime}}$ is a fitness value desired to be obtained. Fitness difference can be interpreted as the discrepancy between the current solution, and the target solution.

During the optimization process, EAs generate offspring through stochastic operations, making the evolutionary process inherently random. This stochastic nature enables us to model the optimization dynamics as a stochastic process. Let $\mathit{\left\{s_{t}\right\}_{t=0}^{\infty}}$ denote a stochastic process on the probability space $\mathit{(\Omega, \mathcal{F}, \mathbb{P})}$, where $\mathit{\Omega}$ is the sample space, $\mathcal{F}$ is the sigma-algebra, and $\mathbb{P}$ is the probability measure. The sigma-algebra $\mathit{\mathcal{F}_{t}=\sigma\left(s_{0}, s_{1}, \ldots, s_{t}\right)}$ represents the natural filtration generated by the process up to generation $t$. The gain at the $\mathit{t}$-th generation is defined as follows \cite{huang2019experimental}:

\begin{equation}\label{eq5}
g_{t}=\varphi_{t}-\varphi_{t+1},
\end{equation}
where $\mathit{\varphi_{t}}$ denotes the smallest fitness difference in previous consecutive $\mathit{t}$ generations. Let $\mathit{G_{t}=\sigma\left(\varphi_{0}, \varphi_{1}, \ldots, \varphi_{t}\right)}$ be a sigma-algebra, the average gain at $\mathit{t}$ is defined as follows \cite{huang2019experimental}:

\begin{equation}\label{eq6}
\mathbb{E}\left(g_{t} \mid G_{t}\right)=\mathbb{E}\left(\varphi_{t}-\varphi_{t+1} \mid G_{t}\right).
\end{equation}

The gain denotes the difference in the best fitness between consecutive generations, reflecting the improvement achieved by the algorithm across generations. A larger gain indicates greater improvement within a single iteration, suggesting faster progress toward the target solution and consequently improved optimization efficiency.

Suppose that $\{\varphi_t\}_{t=0}^\infty$ is a stochastic process, where $\varphi_t \geq 0$ holds for any $t \geq 0$. Given a target precision $\varepsilon > 0$, the FHT of EAs is defined by
\begin{equation}\label{eq7}
T_\varepsilon = \min\{t | \varphi_t \leq \varepsilon\},
\end{equation}
where $T_\varepsilon$ represents the first time when the algorithm's fitness difference falls below the target precision $\varepsilon$ during the optimization process. In particular,
\begin{equation}\label{eq8}
T_0 = \min\{t | \varphi_t = 0\}.
\end{equation}
The EFHT of EAs is denoted by $\mathbb{E}(T_\varepsilon)$, representing the expected number of iterations required for EAs to reach the target solution.

Both the average gain model and drift analysis are used to analyze the running time of EAs, but they differ fundamentally in their analytical approaches, particularly regarding conditional expectations \cite{lengler2020drift,yushan2016first}. The average gain model calculates the expected improvement in the objective function by considering cumulative progress from generation 0 to $t$, with the sigma-algebra incorporating information from all previous generations up to $t$. It focuses on the expected gain per generation and derives convergence speed based on these accumulated gains. In contrast, drift analysis examines the drift process at the current generation $t$, analyzing how randomness and selection pressure influence the algorithm's immediate behavior. It employs recurrence relations to model cumulative drift and estimate convergence, emphasizing immediate changes rather than long-term cumulative progress.

\subsection{Proposed Gain Model for MOEAs}
To extend the average gain model to multi-objective optimization, we adopt IGD \cite{van1998evolutionary} as the performance indicator for measuring algorithmic progress. Let the reference point set be $P^* = \{\boldsymbol{v}_{1}^{*}, \boldsymbol{v}_2^{*}, \ldots, \boldsymbol{v}_{|P^*|}^{*}\}$ and the algorithm's solution set at generation $t$ be $P_t = \{\boldsymbol{u}_1^{(t)}, \boldsymbol{u}_2^{(t)}, \ldots, \boldsymbol{u}_{|P_t|}^{(t)}\}$, where $|\cdot|$ denotes the cardinality of a given set. The IGD is defined as:
\begin{equation}\label{IGD}
	\text{IGD}(P_t, P^*) = \frac{1}{|P^*|} \sum_{i=1}^{|P^*|} \min_{j \in \{1, 2, \ldots, |P_t|\}} \|\boldsymbol{v}_i^{*} - \boldsymbol{u}_j^{(t)}\|,
\end{equation}
where $\| \cdot \|$ denotes the Euclidean distance. IGD evaluates the quality of the obtained solution set by measuring the average distance from each reference point to its nearest solution, thereby reflecting both convergence toward and coverage of the true Pareto front.

The selection of IGD as our evaluation metric is motivated by two key considerations. First, to comprehensively capture the improvement of MOEAs at each iteration, we require a metric that simultaneously evaluates both convergence and diversity rather than focusing on a single aspect. While the hypervolume (HV) indicator \cite{zitzler2002multiobjective} is another comprehensive metric, it exhibits prohibitively high computational complexity, particularly for problems with more than two objectives. Second, our subsequent sampling methodology requires repeated independent evaluations at each generation, making computational efficiency crucial. Therefore, IGD provides an optimal balance between comprehensiveness and computational tractability.

Let $\psi_t$ denote the smallest IGD value achieved up to generation $t$, analogous to the fitness difference concept in single-objective EAs:
\begin{equation}\label{eq:best-igd}
	\psi_t = \min_{i \in \{0, 1, \ldots, t\}} \text{IGD}(P_i, P^*),
\end{equation}
where $\text{IGD}(P_i, P^*)$ is given by Eq. (\ref{IGD}). This quantity represents the minimum distance achieved from any population to the target Pareto front within the first $t+1$ generations.

\begin{definition}[IGD Gain]\label{def:2}
	Given a generation $t$, the IGD gain is defined as
	\begin{equation}\label{eq10}
		g_t = \psi_t - \psi_{t+1}.
	\end{equation}
\end{definition}

Let $\mathit{H}_{\mathrm{t}}=\sigma\left(\mathit{\psi}_{0}, \mathit{\psi}_{1}, \ldots, \mathit{\psi}_{t}\right)$ be the sigma-algebra generated by the IGD history up to generation $t$. The average IGD gain at generation $t$ is
\begin{equation}\label{eq11}
\mathbb{E}\left(g_{t} \mid H_{t}\right)=\mathbb{E}\left(\psi_{t}-\psi_{t+1} \mid H_{t}\right).
\end{equation}

The IGD gain quantifies the progress of MOEAs within a single iteration. A larger gain indicates more rapid reduction in the distance to the target Pareto front.

\begin{lemma}\label{lemma:1}
	Let $\{\eta_t\}_{t=0}^\infty$ be a stochastic process where $\eta_t \geq 0$ for any $t \geq 0$. Let $T_0^\eta = \min\{t | \eta_t = 0\}$. Assuming $\mathbb{E}(T_0^\eta) < +\infty$, if there exists $\alpha > 0$ such that $\mathbb{E}(\eta_t - \eta_{t+1} | H_t) \geq \alpha$ for any $t \geq 0$, then $\mathbb{E}(T_0^\eta | \eta_0) \leq (\eta_0 / \alpha)$.
\end{lemma}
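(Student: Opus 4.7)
The plan is to prove this by a standard additive drift argument: the hypothesis $\mathbb{E}(\eta_t-\eta_{t+1}\mid H_t)\ge \alpha$ says the process drifts toward $0$ at rate at least $\alpha$ per step, so the expected number of steps to hit $0$ can be at most $\eta_0/\alpha$. Concretely, I will work with the stopped process $\eta_{t\wedge T_0^\eta}$ so that the drift hypothesis is only applied at times where the process has not yet absorbed at $0$, then pass to the limit using the assumption $\mathbb{E}(T_0^\eta)<+\infty$.

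First I would fix the initial value $\eta_0$ (so we are computing $\mathbb{E}(\cdot\mid\eta_0)$) and write the telescoping identity
\begin{equation*}
\eta_0-\eta_{T_0^\eta\wedge n} \;=\; \sum_{t=0}^{n-1}\bigl(\eta_{t\wedge T_0^\eta}-\eta_{(t+1)\wedge T_0^\eta}\bigr).
\end{equation*}
Taking conditional expectations term by term and using the tower property with respect to $H_t$, each summand satisfies
\begin{equation*}
\mathbb{E}\!\left(\eta_{t\wedge T_0^\eta}-\eta_{(t+1)\wedge T_0^\eta}\,\big|\,\eta_0\right)\;\ge\;\alpha\cdot\mathbb{P}(T_0^\eta>t\mid\eta_0),
\end{equation*}
because on $\{T_0^\eta>t\}\in H_t$ the drift hypothesis applies, while on $\{T_0^\eta\le t\}$ the increment is $0$. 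Summing and using $\eta_{T_0^\eta\wedge n}\ge 0$ yields
\begin{equation*}
\eta_0 \;\ge\; \alpha\sum_{t=0}^{n-1}\mathbb{P}(T_0^\eta>t\mid\eta_0).
\end{equation*}

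Finally I would let $n\to\infty$. Since $\mathbb{E}(T_0^\eta)<+\infty$, the series $\sum_{t=0}^{\infty}\mathbb{P}(T_0^\eta>t\mid\eta_0)$ converges to $\mathbb{E}(T_0^\eta\mid\eta_0)$ by the standard tail-sum formula for nonnegative integer-valued random variables, giving $\eta_0\ge \alpha\,\mathbb{E}(T_0^\eta\mid\eta_0)$ and hence the claimed bound $\mathbb{E}(T_0^\eta\mid\eta_0)\le \eta_0/\alpha$.

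The main obstacle is the technical handling of the random stopping time: the drift hypothesis only makes sense before absorption, and one must justify interchanging expectation and summation. Introducing the stopped process $\eta_{t\wedge T_0^\eta}$ and invoking $\mathbb{E}(T_0^\eta)<+\infty$ (which legitimizes the monotone/dominated passage to the limit and the tail-sum identity) resolves this cleanly; everything else is telescoping plus the tower property.
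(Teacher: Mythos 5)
Your proof is correct. Note, however, that the paper does not actually prove Lemma~\ref{lemma:1}: it is imported from the cited work of Zhang \emph{et al}.\ \cite{yushan2016first} and used as a black box to establish Theorem~\ref{theorem:1}, so there is no in-paper argument to compare against. Your stopped-process telescoping argument is the standard proof of the additive drift theorem and is sound: the key steps --- writing the increment of the stopped process as $(\eta_t-\eta_{t+1})\mathbf{1}_{\{T_0^\eta>t\}}$, observing that $\{T_0^\eta>t\}$ is $H_t$-measurable so the tower property converts the drift hypothesis into the bound $\alpha\,\mathbb{P}(T_0^\eta>t\mid\eta_0)$, discarding the nonnegative term $\eta_{T_0^\eta\wedge n}$, and passing to the limit via the tail-sum formula --- are all valid. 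You also correctly resolve the only delicate point in the statement, namely that the drift condition can only sensibly be applied before absorption (read literally, ``for any $t\ge 0$'' would force the process negative). Two minor observations: the hypothesis $\mathbb{E}(T_0^\eta)<+\infty$ is not actually needed in your route, since the uniform bound $\alpha\sum_{t=0}^{n-1}\mathbb{P}(T_0^\eta>t\mid\eta_0)\le\eta_0$ for all $n$ already forces the tail sum (hence the expectation, by monotone convergence) to be finite; and the argument implicitly assumes the integrability needed for the conditional expectations in the drift hypothesis to be well defined, which is standard and harmless here.
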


Lemma \ref{lemma:1} \cite{yushan2016first} provides the theoretical foundation for Theorem \ref{theorem:1}.

\begin{theorem}\label{theorem:1}
	Let $\mathit{\{\psi_{t}\}_{t=0}^{\infty}}$ be a stochastic process in MOEAs with $\psi_t \geq 0$ for all $t \geq 0$. Let $h: (0, \psi_{0}] \rightarrow \mathbb{R}^{+}$ be a monotonically increasing and continuous function. Given a target precision $\varepsilon > 0$, if 
	\[
	\mathbb{E}\left(\psi_{t} - \psi_{t+1} \mid H_{t}\right) \geq h(\psi_t)
	\]
	holds whenever $\psi_t > \varepsilon$, then the expected time $\mathbb{E}(T_{\varepsilon} \mid \psi_{0})$ for the IGD value to reach $\varepsilon$ satisfies the following upper bound:  
	\begin{equation}\label{eq12}
	\mathbb{E}(T_{\varepsilon} \mid \psi_{0}) \leq 1 + \int_{\varepsilon}^{\psi_{0}} \frac{1}{h(z)} dz.
	\end{equation}
\end{theorem}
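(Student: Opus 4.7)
The plan is to reduce the statement to the additive drift result of Lemma~\ref{lemma:1} by constructing a potential function that converts the variable drift lower bound $h(\psi_t)$ into a unit drift. Concretely, I would define
\[
g(x) \;=\; \int_{\varepsilon}^{x}\frac{1}{h(z)}\,dz \quad \text{for } x \geq \varepsilon, \qquad g(x) \;=\; 0 \quad \text{for } x < \varepsilon,
\]
and then analyze the auxiliary process $Y_t := g(\psi_t)$. Since $g$ is nonnegative and $g(\psi_t)=0$ iff $\psi_t \leq \varepsilon$, the hitting time $T_0^Y$ of $Y_t$ at zero coincides with $T_\varepsilon$, which puts the problem into the exact form required by Lemma~\ref{lemma:1}.

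First I would establish that $\{\psi_t\}$ is non-increasing; this follows directly from the definition $\psi_t=\min_{i\le t}\mathrm{IGD}(P_i,P^{*})$ in Eq.~(\ref{eq:best-igd}). With monotonicity in hand, for any sample path with $\psi_t>\varepsilon$ we have $\psi_{t+1}\le\psi_t$, and I split into two cases. If $\psi_{t+1}\ge\varepsilon$, the fundamental theorem of calculus combined with the monotonicity of $h$ gives
\[
g(\psi_t)-g(\psi_{t+1}) \;=\; \int_{\psi_{t+1}}^{\psi_t}\frac{dz}{h(z)} \;\geq\; \frac{\psi_t-\psi_{t+1}}{h(\psi_t)},
\]
because $h(z)\le h(\psi_t)$ throughout the integration range. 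If $\psi_{t+1}<\varepsilon$, then $Y_{t+1}=0$, so the potential drop equals the whole integral $g(\psi_t)$. Taking conditional expectations, the hypothesis $\mathbb{E}(\psi_t-\psi_{t+1}\mid H_t)\ge h(\psi_t)$ yields $\mathbb{E}(Y_t-Y_{t+1}\mid H_t)\ge 1$ on the event $\{\psi_t>\varepsilon\}$. Lemma~\ref{lemma:1} applied with $\alpha=1$ then delivers
\[
\mathbb{E}(T_\varepsilon\mid\psi_0) \;\leq\; Y_0 \;=\; \int_{\varepsilon}^{\psi_0}\frac{dz}{h(z)}.
\]

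The main obstacle is the boundary-crossing step: on the transition from $\psi_t>\varepsilon$ to $\psi_{t+1}<\varepsilon$, the realized jump $\psi_t-\psi_{t+1}$ can overshoot $\psi_t-\varepsilon$, so the bound $g(\psi_t)\ge (\psi_t-\varepsilon)/h(\psi_t)$ does not cleanly absorb the full expected decrease of $\psi_t$ through $h(\psi_t)$. I would handle this by treating the crossing iteration separately: decompose $T_\varepsilon = (T_\varepsilon-1) + 1$, apply the drift argument above to the stopped process up to and including the last iteration with $\psi_{t+1}\ge\varepsilon$, and charge the one remaining overshoot step to the additive constant $1$ in the conclusion. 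This decomposition is precisely what produces the additive $1$ in $\mathbb{E}(T_\varepsilon\mid\psi_0)\le 1+\int_\varepsilon^{\psi_0}\frac{dz}{h(z)}$, completing the proof.
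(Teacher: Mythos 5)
Your overall strategy --- turning the variable drift condition into a unit drift via the potential $\int_{\varepsilon}^{x}\frac{dz}{h(z)}$ and then invoking Lemma~\ref{lemma:1} --- is exactly the paper's route, and your case $\psi_{t+1}\ge\varepsilon$ reproduces the paper's Case~2 verbatim. The gap is in how you close the boundary-crossing case. The proposed fix, ``decompose $T_\varepsilon=(T_\varepsilon-1)+1$ and apply the drift argument to the stopped process up to and including the last iteration with $\psi_{t+1}\ge\varepsilon$,'' does not go through as stated: whether step $t$ is the crossing step is determined by $\psi_{t+1}$, so it is not an $H_t$-measurable event and $T_\varepsilon-1$ is not a stopping time for the filtration under which Lemma~\ref{lemma:1} operates. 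More substantively, the hypothesis only controls $\mathbb{E}(\psi_t-\psi_{t+1}\mid H_t)$ as a whole; if most of that expected decrease is realized by outcomes that overshoot far below $\varepsilon$, then the drift of $Y_t=g(\psi_t)$ restricted to non-crossing outcomes can be arbitrarily small, and the count of non-crossing steps need not be bounded by $\int_{\varepsilon}^{\psi_0}\frac{dz}{h(z)}$. ``Charging the overshoot step to the additive $1$'' is therefore a description of the desired conclusion, not an argument.

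The paper instead builds the additive $1$ into the potential itself: it defines $l(z)=0$ for $z\le\varepsilon$ and $l(z)=1+\int_{\varepsilon}^{z}\frac{dw}{h(w)}$ for $z>\varepsilon$, so that a crossing step gives a pointwise drop $l(\psi_t)-l(\psi_{t+1})\ge 1$ while a non-crossing step gives a drop at least $(\psi_t-\psi_{t+1})/h(\psi_t)$; Lemma~\ref{lemma:1} is then applied to $l(\psi_t)$ with $\alpha=1$, yielding $l(\psi_0)=1+\int_{\varepsilon}^{\psi_0}\frac{dz}{h(z)}$. Adopting that potential would turn your two cases into the paper's two cases. That said, your instinct that the crossing step is the delicate point is well founded: even with the jump built into $l$, the two cases are separated by the non-$H_t$-measurable event $\{\psi_{t+1}\le\varepsilon\}$, and merging the two pointwise bounds into the single claim $\mathbb{E}(l(\psi_t)-l(\psi_{t+1})\mid H_t)\ge 1$ needs an extra step (a single pointwise inequality valid for every realization of $\psi_{t+1}$, or an assumption limiting overshoot below $\varepsilon$). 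The paper's write-up passes over this quickly, so if you repair your proof along the paper's lines, this is the point to treat carefully rather than inherit.
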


\begin{proof}
	Define the auxiliary function
	\[
	l(z) =
	\begin{cases} 
		0, & z \leq \varepsilon, \\
		\int_\varepsilon^z \frac{1}{h(w)} \, dw + 1, & z > \varepsilon.
	\end{cases}
	\]
	
	We consider two cases:
	
	\textbf{Case 1:} If $\psi_t > \varepsilon$ and $\psi_{t+1} \leq \varepsilon$, then
	\[
	l(\psi_t) - l(\psi_{t+1}) = 1 + \int_\varepsilon^{\psi_t} \frac{1}{h(w)} \, dw \geq 1.
	\]
	Thus, $\mathbb{E}(l(\psi_t) - l(\psi_{t+1}) \mid H_t) \geq 1$.
	
	\textbf{Case 2:} If $\psi_t > \varepsilon$ and $\psi_{t+1} > \varepsilon$, then
	\[
	l(\psi_t) - l(\psi_{t+1}) = \int_{\psi_{t+1}}^{\psi_t} \frac{1}{h(w)} \, dw \geq \frac{\psi_t - \psi_{t+1}}{h(\psi_t)}.
	\]
	Therefore,
	\[
	\mathbb{E}(l(\psi_t) - l(\psi_{t+1}) \mid H_t) \geq \frac{\mathbb{E}(\psi_t - \psi_{t+1} \mid H_t)}{h(\psi_t)} \geq 1.
	\]
	
	In both cases, $\mathbb{E}(l(\psi_t) - l(\psi_{t+1}) \mid H_t) \geq 1$ whenever $\psi_t > \varepsilon$. Let $T^l_0 = \min\{t \mid l(\psi_t) = 0\}$ denote the FHT of the process $\{l(\psi_t)\}_{t=0}^\infty$. Since $T_\varepsilon = \min\{t \mid \psi_t \leq \varepsilon\} = \min\{t \mid l(\psi_t) = 0\}$, applying Lemma \ref{lemma:1} yields
	\[
	\mathbb{E}(T_\varepsilon \mid \psi_0) = \mathbb{E}(T^l_0 \mid l(\psi_0)) \leq l(\psi_0) = 1 + \int_\varepsilon^{\psi_0} \frac{1}{h(z)} \, dx.
	\]
\end{proof}

Theorem \ref{theorem:1} establishes the theoretical foundation for using average IGD gain to analyze upper bounds on EFHT of MOEAs. The sequence $\mathit{\left\{\psi_{t}\right\}_{t=0}^{\infty}}$ represents the progression of historical minimum IGD values that monotonically decrease during the optimization process. However, the primary challenge in applying this theoretical framework lies in analytically deriving the probability distribution of the IGD gain $\mathbb{E}\left(\psi_{t}-\psi_{t+1} \mid H_{t}\right)$. 
To overcome this challenge, we propose an experimental approach that estimates $\mathbb{E}\left(\psi_{t}-\psi_{t+1} \mid H_{t}\right)$ through statistical sampling, followed by surface fitting techniques to determine the function $h(\psi_t)$. When the fitted function $h(\psi_t)$ satisfies the conditions specified in Theorem \ref{theorem:1}, it enables the computation of upper bounds on EFHT $\mathbb{E}\left(T_{\varepsilon} \mid \psi_{0}\right)$. The detailed implementation procedures of this experimental methodology are presented in the following section.

\section{Experimental Approach for Running-Time Estimation Based on IGD-Gain Model}\label{Approach}

\begin{figure*}[t]
	\centering
	\includegraphics[width=0.8\textwidth]{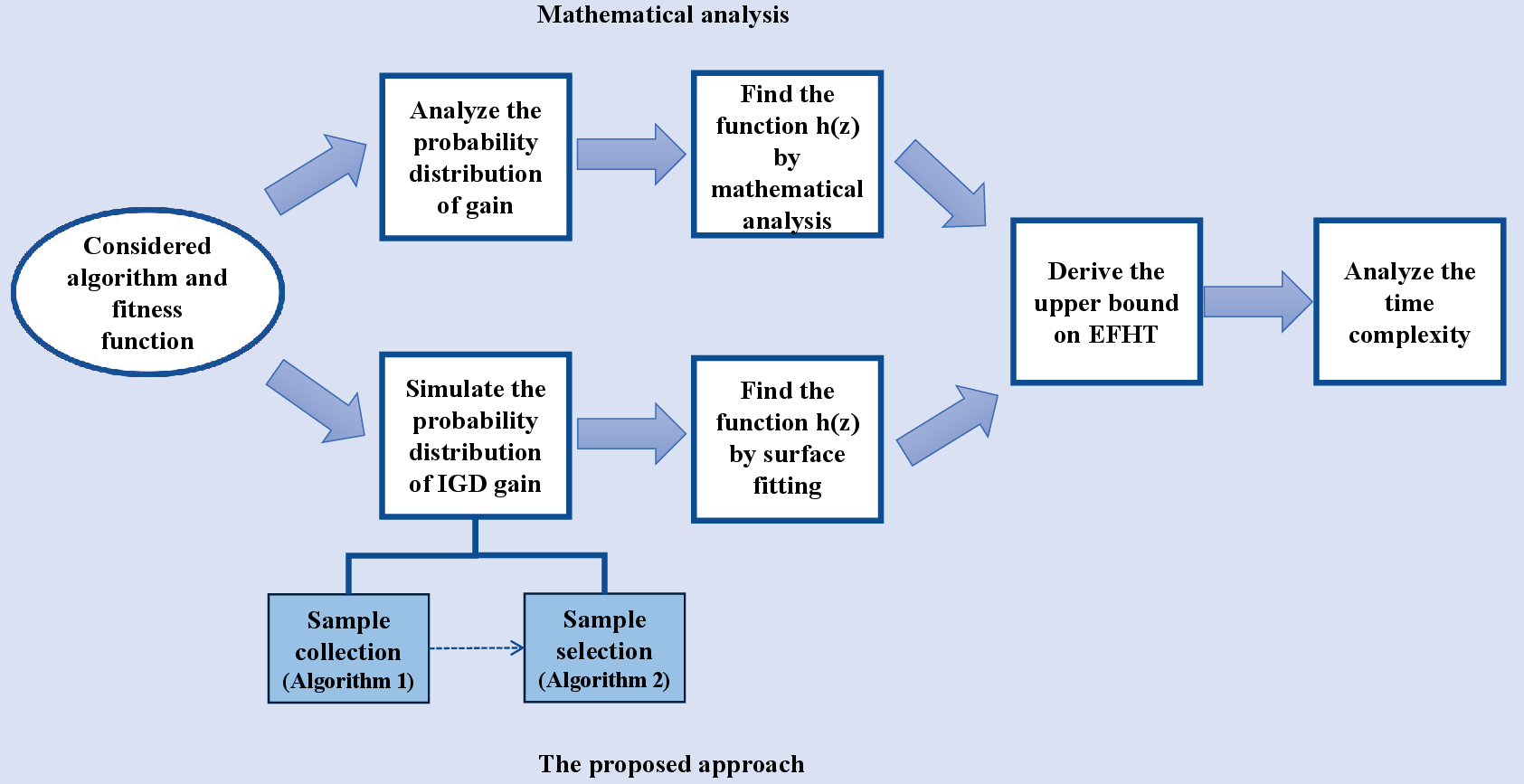}
	\caption{Comparison between mathematical derivation and the proposed experimental estimation approach.}
	\label{fig1}
\end{figure*}

This section presents an experimental approach for estimating the running time of MOEAs in numerical optimization. The overall framework of our proposed methodology is illustrated in Fig. \ref{fig1}. Our approach consists of four main phases: First, we collect numerical data regarding IGD values and their corresponding average gains through systematic statistical experiments. Second, we employ the empirical distribution function derived from the collected data to approximate the probability distribution of the IGD gain. Third, to enhance the accuracy of this approximation, we introduce an adaptive sampling method that dynamically adjusts sampling density based on the characteristics of the gain distribution. Finally, we apply surface fitting techniques to transform the experimental results into a function $h(\psi_t)$ that satisfies the conditions specified in Theorem \ref{theorem:1}. Based on the fitted surface, we utilize Theorem \ref{theorem:1} to compute the estimated upper bound on the EFHT of the considered MOEAs.

\subsection{Statistical Estimation of IGD Gain Distribution}
A crucial component of our proposed methodology is the estimation of $\mathbb{E}\left(T_{\varepsilon} \mid \psi_{0}\right)$. Our approach is theoretically grounded in the Glivenko-Cantelli theorem \cite{tucker1959generalization}, a fundamental result in probability theory that establishes the convergence relationship between empirical and true distribution functions. The theorem states that as the sample size increases, the empirical distribution function converges almost surely to the true distribution function. Formally, for sufficiently large sample sizes, the empirical distribution function approaches the true distribution function with probability 1 \cite{tucker1959generalization}.

Let $K$ denote the sample size, and let $X_1, X_2, \ldots, X_K$ represent the generated samples, ordered such that $X_1 \leq X_2 \leq \ldots \leq X_K$. Suppose that $T_{\varepsilon} \mid \psi_{0} \sim Q(r)$, where $r = \psi_t - \psi_{t+1}$, and that the empirical distribution function $Q_K(r)$ is simulated by the statistical experiment based on $H_t$. The true distribution $Q(r)$ can be estimated by $Q_K(r)$ when $K$ is sufficiently large. The empirical distribution function is defined as
\begin{equation}\label{eq13}
	Q_{K}(r)=\left\{\begin{array}{ll}
		0, & r<X_{1} \\
		\frac{i}{K}, & X_{i} \leq r<X_{i+1}, \quad i=1,2, \ldots, K-1 \\
		1, & r \geq X_{K}
	\end{array}\right.
\end{equation}

By the Glivenko-Cantelli theorem, when $K$ is sufficiently large, the empirical distribution function $Q_K(r)$ converges to the true distribution function $Q(r)$. Consequently, the sample mean of $X_1, X_2, \ldots, X_K$ approaches the expectation of $\psi_t - \psi_{t+1}$. Therefore, we obtain
\begin{equation}\label{eq14}
	\mathbb{E}\left(\psi_{t}-\psi_{t+1} \mid H_{t}\right) \approx \mathbb{E}(X_1, X_2, \ldots, X_K).
\end{equation}

\subsection{Adaptive Sampling Method}

This subsection presents our adaptive sampling methodology, which forms the core experimental component for estimating IGD gain distributions. The methodology consists of two algorithms: Algorithm \ref{alg:1} performs systematic data collection during MOEA execution to gather IGD values and corresponding average gains across different problem dimensions, while Algorithm \ref{alg:2} implements intelligent sample selection and preprocessing techniques to ensure effective surface fitting for running time estimation.

\begin{algorithm}[h]
	\caption{IGD-Gain Data Collection from MOEAs}
	\label{alg:1}
	\KwIn{A set of problem dimensions $Ns = \{n_1, n_2, \ldots, n_j\}$, fixed sample size $K$ \tcp{$j$ is the cardinality of the set of problem dimensions}}
	\KwOut{Sample points $S = \emptyset$}
	
	\For{each $n \in Ns$}{
		Initialize population $P_0$ randomly\;
		$t \leftarrow 0$\;
		\While{termination criteria is not satisfied}{
			\For{$i \leftarrow 1$ \KwTo $K$}{
				Generate offspring population $P'$ \;\tcp{according to the procedure of the considered MOEA}
                
				Evaluate each individual of parent population $P$ and offspring population $P'$\;
				Record minimum IGD value $\psi_{t}^{(i)}$ and corresponding gain $g_{t}^{(i)}$\;
			}
			Compute average gain $\overline{g}_t$ (according to Eq. (\ref{avg})) and minimum IGD value $\psi_t$ for generation $t$\;
			Perform environmental selection to form next generation population\;
			$S \leftarrow S \cup \{(\psi_t, \overline{g}_t)\}$\;
			$t \leftarrow t + 1$\;
		}
	}
	\Return{$S$}\;
\end{algorithm}

Algorithm \ref{alg:1} focuses on comprehensive data collection during the optimization process of MOEAs. The algorithm operates by running the considered MOEA while simultaneously recording IGD values and computing average gains without modifying the original algorithmic procedures. During each generation $t$, we conduct $K$ independent sampling runs to collect the $i$-th observed minimum IGD value $\psi_{t}^{(i)}$ and the corresponding gain $g_{t}^{(i)}$. The average gain at generation $t$ is calculated (Line 10) as
\begin{equation}\label{avg}
	\overline{g}_{t}
	\;=\;
	\frac{1}{K}
	\sum_{i=1}^{K} g_{t}^{(i)} \,.
\end{equation}
By systematically collecting each generation's minimum IGD and average gain values $\bigl(\psi_{t}, \overline{g}_{t}\bigr)$ and appending them to the sampling set $S$ (Line 12), we construct a comprehensive dataset that captures the optimization dynamics across different problem dimensions.

Algorithm \ref{alg:2} implements an adaptive sample selection strategy designed to address two critical challenges in surface fitting. First, utilizing excessive sample points would incur prohibitive computational costs during surface fitting procedures. Second, fitting large datasets necessitates complex surface models, resulting in intricate functional expressions that compromise both interpretability and computational efficiency. To address these challenges, we introduce an adaptive parameter $M$ that determines the total number of sample points for each dimension, where $M$ is positively correlated with the problem dimension $n$ (Line 2). This correlation reflects the observation that higher-dimensional problems typically require more iterations for convergence, necessitating correspondingly denser sampling to accurately capture optimization dynamics. The algorithm first excludes outlier sample points with zero average gain (Line 3), as such points indicate stagnation periods where the algorithm fails to achieve IGD improvements. Subsequently, we apply Locally Estimated Scatterplot Smoothing (LOESS) \cite{cleveland1979robust} for noise reduction (Line 4), which fits simple regression models to localized data subsets, effectively capturing underlying trends while filtering high-frequency noise. The sample selection strategy subsequently identifies samples with uniformly distributed IGD values (Lines 5-9) to facilitate effective surface fitting. Finally, we apply an adaptive scaling parameter $\lambda$ to the selected gain samples (Lines 10-13). Although the denoising process yields smoother data and more stable fitting results, it inevitably removes peak values representing significant optimization breakthroughs. The adaptive scaling parameter compensates for this information loss by proportionally amplifying gain values, ensuring that the estimated upper bounds remain meaningful and not overly conservative.

\begin{algorithm}[h]
	\caption{Adaptive Sample Point Selection}
	\label{alg:2}
	\KwIn{Problem dimensions $Ns = \{n_1, n_2, \ldots, n_j\}$, collected sample points $S$}
	\KwOut{Selected sample points $S_M = \emptyset$}
	
	\ForEach{$n \in Ns$}{
		Set adaptive sampling size $M \leftarrow 2 \cdot n$\;
		Remove sample points where average gain equals zero\;
		Apply LOESS denoising to $S$\;
		Generate arithmetic sequence $\xi$ with $M$ elements uniformly distributed in $[\psi_{min}, \psi_{max}]$\;
		\For{$i \leftarrow 1$ \KwTo $M$}{
			Find the sample point $(\psi^*, g^*) \in S$ such that $|\psi^* - \xi_i|$ is minimized\; \tcp{$|\cdot|$ denotes the absolute value}
			$S_M \leftarrow S_M \cup \{(\psi^*, g^*)\}$\;
		}
		Compute adaptive scaling parameter $\lambda \leftarrow \frac{g_{max}}{g_{mean}}$\; \tcp{$g_{\max}$ and $g_{mean}$ denote the maximum and mean average gain values in $S_M$, respectively}
		\ForEach{$(\psi, g) \in S_M$}{
			$g \leftarrow g \cdot \lambda$\;
		}
	}
	\Return{$S_M$}\; 
\end{algorithm}

\subsection{Surface Fitting and Complexity Analysis}

According to Theorem \ref{theorem:1}, after collecting and preprocessing the sample data, we need to determine a function $h(\psi_t)$ to derive the upper bound on the EFHT of MOEAs. Surface fitting serves as a powerful mathematical tool for transforming discrete experimental data into continuous analytical expressions. This technique enables us to construct smooth surfaces from scattered data points, effectively capturing the underlying relationships between IGD values, average gains, and problem dimensions, thereby facilitating the derivation of function $h(\psi_t)$ that satisfies the conditions specified in Theorem \ref{theorem:1}.

The selection of an appropriate functional form for surface fitting is guided by empirical observations of the optimization dynamics in MOEAs. Our experimental analysis reveals that the average gain typically exhibits a positive correlation with IGD values, reflecting the intuitive notion that larger IGD values (indicating greater distance from the Pareto front) provide more room for improvement and thus enable larger gains. Conversely, the average gain demonstrates a negative correlation with problem dimensions, consistent with the increased optimization difficulty in higher-dimensional spaces. Based on these empirical patterns and following methodologies similar to those described in \cite{huang2019experimental}, we adopt a power-law functional form that captures these relationships effectively.

The mathematical expression employed for surface fitting is given by:
\begin{equation}
	\label{eq15}
	f(\psi, n)=\frac{a \times \psi^{b}}{c \times n^{d}}, \quad a, c, d>0, \quad b \geq 1,
\end{equation}
where $n$ represents the problem dimension, and $a$, $b$, $c$, $d$ are parameters to be determined through the fitting process. This functional form systematically captures the observed relationships: the numerator $a \times \psi^b$ models the positive correlation between gain and IGD value with the constraint $b \geq 1$ ensuring monotonic behavior, while the denominator $c \times n^d$ represents the inverse relationship with problem dimension. The power-law structure provides sufficient flexibility to accommodate various optimization scenarios while maintaining mathematical tractability for subsequent analysis. Once the parameter values are determined through least-squares fitting or similar optimization techniques, the complete surface representation enables the direct application of Theorem \ref{theorem:1} for EFHT upper bound estimation.

Assuming that the continuous function $h(z)$ in Theorem \ref{theorem:1} conforms to the structure of Eq. (\ref{eq15}), we derive the following corollary to directly estimate the running time complexity of the considered MOEAs.

\begin{corollary}\label{colry:1}
	Given sample data points $(n_i, \psi_i, g_i)$ for $i = 1, 2, \ldots, \kappa$, the initial IGD value $X_0$, and target precision $\varepsilon > 0$, where $g_i$ denotes the average gain associated with IGD value $\psi_i$, and $\kappa$ is the total number of sampling points across all problem dimensions. Through surface fitting of these discrete sample points, we obtain a continuous function $f(\psi, n) = ({a \times \psi^{b}})/({c \times n^{d}})$ with parameters $a,c,d>0$ and $b \geq 1$, where $\psi$ represents the continuous IGD variable. Assuming that this fitted surface provides a lower bound for all observed gain values, the expected first hitting time satisfies:
	\begin{enumerate}
		\item If $b=1$, then $\mathbb{E}\left(T_{\varepsilon} \mid X_{0}\right) \in O\left(n^{d} \ln \left(\frac{X_{0}}{\varepsilon}\right)\right)$;
		
		\item If $b>1$, then $\mathbb{E}\left(T_{\varepsilon} \mid X_{0}\right) \in O\left(n^{d} \varepsilon^{-(b-1)}\right)$.
	\end{enumerate}
\end{corollary}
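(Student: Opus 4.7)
The plan is to instantiate Theorem \ref{theorem:1} with the fitted function and carry out the two resulting integrals explicitly. Concretely, I would set $h(z) = f(z,n) = (a z^{b})/(c n^{d})$, treating the problem dimension $n$ as a fixed parameter during the runtime estimation. Before invoking the theorem, I first need to verify its hypotheses for this choice of $h$: (i) $h$ maps $(0, X_0]$ to $\mathbb{R}^{+}$, which follows from $a, c > 0$; (ii) $h$ is continuous on $(0, X_0]$, which is immediate for a power function; and (iii) $h$ is monotonically increasing, which is guaranteed by $b \geq 1$ since $z^{b}$ is increasing for $z > 0$. The remaining hypothesis of Theorem \ref{theorem:1}, namely $\mathbb{E}(\psi_t - \psi_{t+1} \mid H_t) \geq h(\psi_t)$ whenever $\psi_t > \varepsilon$, is precisely the lower-bound assumption on the fitted surface stated in the corollary itself.

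Once the hypotheses are in place, Theorem \ref{theorem:1} yields
\begin{equation*}
\mathbb{E}(T_{\varepsilon} \mid X_{0}) \;\leq\; 1 + \int_{\varepsilon}^{X_{0}} \frac{c\, n^{d}}{a\, z^{b}} \, dz \;=\; 1 + \frac{c\, n^{d}}{a} \int_{\varepsilon}^{X_{0}} z^{-b} \, dz,
\end{equation*}
so the remaining work is to evaluate the integral in each of the two regimes. For case (1), $b = 1$ gives a logarithm, $\int_{\varepsilon}^{X_{0}} z^{-1} dz = \ln(X_0/\varepsilon)$, producing the bound $1 + (c/a)\, n^{d} \ln(X_{0}/\varepsilon)$, which belongs to $O(n^{d}\ln(X_{0}/\varepsilon))$ since $a$ and $c$ are constants of the fit. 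For case (2), $b > 1$ gives $\int_{\varepsilon}^{X_0} z^{-b} dz = (\varepsilon^{-(b-1)} - X_0^{-(b-1)})/(b-1)$; since $\varepsilon \to 0^{+}$ is the regime of interest and $X_0^{-(b-1)}$ is a fixed constant (and smaller in magnitude than $\varepsilon^{-(b-1)}$ whenever $\varepsilon \leq X_0$), the dominant term is $\varepsilon^{-(b-1)}$, yielding the claimed bound in $O(n^{d}\varepsilon^{-(b-1)})$.

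The main obstacle here is conceptual rather than computational: the calculus is routine, but the step that deserves careful wording is the passage from the empirical fit to the theoretical lower bound $h(\psi_t)$. The assumption of the corollary is that the fitted surface lies pointwise below all observed average gains; one should make explicit that this is what licenses treating $f(\psi, n)$ as a valid $h$ in Theorem \ref{theorem:1}, rather than just a best-fit surrogate. I would briefly remark that if the fit were instead a least-squares estimate through the middle of the data, Theorem \ref{theorem:1} would not apply directly, which is precisely why Algorithm \ref{alg:2} introduces the adaptive scaling parameter $\lambda$ and why the monotonicity constraint $b \geq 1$ is imposed in Eq. (\ref{eq15}). With that caveat recorded, the chain Theorem \ref{theorem:1} $\Rightarrow$ explicit integration $\Rightarrow$ asymptotic extraction closes out both cases and hides no further subtleties.
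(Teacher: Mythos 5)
Your proposal is correct and follows essentially the same route as the paper's own proof: fix $n$, check that $f(\psi,n)$ is positive, continuous, and monotonically increasing so that Theorem \ref{theorem:1} applies with $h = f(\cdot,n)$, then evaluate the integral separately for $b=1$ and $b>1$ and extract the asymptotics. Your added remark that the lower-bound assumption (rather than a best-fit interpretation) is what licenses the substitution is a useful clarification but does not change the argument.
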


\begin{proof}
	When problem dimension $n$ is fixed, $f(\psi, n)$ can be treated as a univariate function of $\psi$. Given the constraints $a, c > 0$ and $b \geq 1$, the function $f(\psi, n)$ is strictly positive and monotonically increasing over the interval $(0, +\infty)$, thereby satisfying the conditions specified in Theorem \ref{theorem:1}. Applying Theorem \ref{theorem:1}, the EFHT is bounded above by $1 + \int_{\varepsilon}^{X_0} ({c \cdot n^d}) / ({a \cdot \psi^b}) \, d\psi$.
	\begin{enumerate}
		\item For the case $b = 1$: Since $a, c > 0$, we have
		\begin{align}
			1 + \int_{\varepsilon}^{X_0} \frac{c \cdot n^d}{a \cdot \psi} \, d\psi &= 1 + \frac{c \cdot n^d}{a} \int_{\varepsilon}^{X_0} \frac{1}{\psi} \, d\psi \\
			&= 1 + \frac{c \cdot n^d}{a} \ln(\psi) \big|_{\varepsilon}^{X_0} \\
			&= 1 + \frac{c \cdot n^d}{a} \ln\left(\frac{X_0}{\varepsilon}\right)
		\end{align}
		Therefore, $\mathbb{E}(T_{\varepsilon} \mid X_0) \in O\left(n^d \ln\left(\frac{X_0}{\varepsilon}\right)\right)$.
		
		\item For the case $b > 1$: Since $a, c > 0$, $b > 1$, and typically $\varepsilon \ll X_0$, we have
		\begin{align}
			1 + \int_{\varepsilon}^{X_0} \frac{c \cdot n^d}{a \cdot \psi^b} \, d\psi &= 1 + \frac{c \cdot n^d}{a} \int_{\varepsilon}^{X_0} \psi^{-b} \, d\psi \\
			&= 1 + \frac{c \cdot n^d}{a} \left[ \frac{\psi^{1-b}}{1-b} \right]_{\varepsilon}^{X_0} \\
			&= 1 + \frac{c \cdot n^d}{a(1-b)} \left( X_0^{1-b} - \varepsilon^{1-b} \right)
		\end{align}
		Since $b > 1$ and $\varepsilon \ll X_0$, the term $\varepsilon^{1-b}$ dominates, yielding $\mathbb{E}(T_{\varepsilon} \mid X_0) \in O\left(n^d \varepsilon^{-(b-1)}\right)$.
	\end{enumerate}
\end{proof}

Throughout the experimental studies presented in this paper, we consistently employ Eq. (\ref{eq15}) for surface fitting. Consequently, the experimental results can be directly applied to derive upper bounds on running time complexity using the analytical framework established in Corollary \ref{colry:1}.

\section{Experimental Study}\label{Experiment}
This section presents experimental validation of our proposed approach for estimating upper bounds on the EFHT of MOEAs. We evaluate five representative algorithms: NSGA-II \cite{deb2002fast}, MOEA/D \cite{zhang2007moea}, AR-MOEA \cite{tian2017indicator}, PREA \cite{yuan2020investigating}, and AGEMOEA-II \cite{panichella2022improved}, covering the three main MOEA categories: dominance-based, decomposition-based, and indicator-based approaches.
The experimental testbed uses benchmark problems from the ZDT \cite{zitzler2000comparison} and DTLZ \cite{deb2002scalable} test suites: ZDT1, ZDT2, ZDT3, ZDT4, ZDT6, DTLZ1, DTLZ2, DTLZ3, DTLZ5, and DTLZ6. For most problems, we employ dimensions $n \in \{5, 10, 15, 20, 25, 30\}$ following \cite{huang2019experimental}. For ZDT4 and ZDT6, we use $n \in \{2, 4, 6, 8, 10\}$ due to their variable domain constraints. The statistical sampling uses $K = 100$. Additionally, we performed a comparative validation against existing theoretical results. All implementations are sourced from the PlatEMO platform \cite{PlatEMO}.

\captionsetup[table*]{labelformat=simple, labelsep=newline, textfont=sc}
\renewcommand\arraystretch{1.5}%保证每列高度是原先的1.5倍
\begin{table*}[b]%星号表示双栏
    \caption{
        \footnotesize \MakeUppercase Comparison of Estimated and Numerical Experiment Results for Running Time of NSGA-\uppercase\expandafter{\romannumeral2}.
    }
    \begin{center}
    \resizebox{0.8\textwidth}{!}{
        \begin{tabular}{c c c c c c}%设定每列的宽度以及对齐方式，并且可以做到自动换行
            \toprule%第一条粗线
            ~~~~~~ & ~~~Estimated Running Time~~~ & ~~~St.D.~~~ & ~~~Mean~~~ & ~~~Estimation Results~~~ & ~~~Deviation($R^2$)~~~ \\
           % ~~~~~~ & ~~~IGD(+/-/=)~~~ & ~~~HV(+/-/=)~~~ \\ 
            \midrule%第二条粗线
            ZDT1   & $1.423 \times n^{1.234} \ln \left(\frac{X_{0}}{\varepsilon}\right)+1$ & 5.07E+02 & 5.78E+03 & 2.00E+4 & 0.533 \\ 
            ZDT2   & $4.637 \times n^{1.015} \ln \left(\frac{X_{0}}{\varepsilon}\right)+1$ & 1.15E+03 & 6.70E+03 & 4.00E+4 & 0.792 \\ 
            ZDT3   & $5.477 \times n^{0.929} \ln \left(\frac{X_{0}}{\varepsilon}\right)+1$ & 6.57E+03 & 7.36E+03 & 3.09E+04 & 0.904 \\ 
            ZDT4   & $30.000 \times n^{0.583} \ln \left(\frac{X_{0}}{\varepsilon}\right)+1$ & 3.12E+03 & 1.84E+04 & 9.95E+04 & 0.867 \\
            ZDT6  & $15.290 \times n^{1.045} \ln \left(\frac{X_{0}}{\varepsilon}\right)+1$ & 1.49E+03 & 1.44E+04 & \ 1.09E+05 & 0.120 \\ 
            DTLZ1  & $30.000 \times n^{0.573} \ln \left(\frac{X_{0}}{\varepsilon}\right)+1$ & 1.70E+04 & 5.41E+04 & 1.15E+05 & 0.909 \\ 
            DTLZ2  & $5.477 \times n^{0.921} \ln \left(\frac{X_{0}}{\varepsilon}\right)+1$ & 7.33E+03 & 1.69E+04 & 1.79E+04 & 0.183 \\ 
            DTLZ3  & $30.000 \times n^{1.333} \ln \left(\frac{X_{0}}{\varepsilon}\right)+1$ & 1.50E+04 & 6.34E+04 & 1.00E+06 & \textbf{-1.714} \\ 
            DTLZ5  & $5.477 \times n^{0.950} \ln \left(\frac{X_{0}}{\varepsilon}\right)+1$ & 5.21E+02 & 5.22E+03 & 2.73E+04 &  0.448 \\ 
            DTLZ6  & $30.000 \times n^{0.808} \ln \left(\frac{X_{0}}{\varepsilon}\right)+1$ & 1.86E+03 & 8.32E+03 & 1.85E+05 & \textbf{-0.654} \\ 
            \bottomrule%第三条粗线
        \end{tabular}
        }
        \label{t1}
    \end{center}
\end{table*}

\captionsetup[table*]{labelformat=simple, labelsep=newline, textfont=sc}
\renewcommand\arraystretch{1.5}%保证每列高度是原先的1.5倍
\begin{table*}[b]%星号表示双栏
    \caption{
        \footnotesize \MakeUppercase Comparison of Estimated and Numerical Experiment Results for Running Time of MOEA/D.
    }
    \begin{center}
    \resizebox{0.8\textwidth}{!}{
        \begin{tabular}{c c c c c c}%设定每列的宽度以及对齐方式，并且可以做到自动换行
            \toprule%第一条粗线
            ~~~~~~ & ~~~Estimated Running Time~~~ & ~~~St.D.~~~ & ~~~Mean~~~ & ~~~Estimation Results~~~ & ~~~Deviation($R^2$)~~~ \\
           % ~~~~~~ & ~~~IGD(+/-/=)~~~ & ~~~HV(+/-/=)~~~ \\ 
            \midrule%第二条粗线
            ZDT1   & $5.477 \times n^{0.794} \ln \left(\frac{X_{0}}{\varepsilon}\right)+1$ & 1.65E+03 & 7.62E+03 & 2.26E+04 & 0.317 \\
            ZDT2   & $5.477 \times n^{0.862} \ln \left(\frac{X_{0}}{\varepsilon}\right)+1$ & 1.70E+03 & 7.17E+03 & 2.90E+04 & 0.411 \\
            ZDT3   & $5.477 \times n^{0.599} \ln \left(\frac{X_{0}}{\varepsilon}\right)+1$ & 5.59E+03 & 1.18E+04 & 1.53E+04 & 0.721  \\ 
            ZDT4   & $8.272 \times n^{1.169} \ln \left(\frac{X_{0}}{\varepsilon}\right)+1$ & 3.17E+03 & 2.32E+04 & 7.21E+04 & 0.579 \\ 
            ZDT6   & $5.477 \times n^{1.166} \ln \left(\frac{X_{0}}{\varepsilon}\right)+1$ & 9.45E+02 & 1.05E+04 & 5.06E+04 & 0.626 \\ 
            DTLZ1  & $11.825 \times n^{1.164} \ln \left(\frac{X_{0}}{\varepsilon}\right)+1$ & 7.15E+03 & 3.73E+04 & 2.05E+05 & 0.101 \\ 
            DTLZ2  & $5.477 \times n^{0.892} \ln \left(\frac{X_{0}}{\varepsilon}\right)+1$ & 2.33E+03 & 7.93E+03 & 1.11E+04 & 0.379 \\ 
            DTLZ3  & $28.271 \times n^{0.627} \ln \left(\frac{X_{0}}{\varepsilon}\right)+1$ & 7.96E+03 & 4.56E+04 & 1.28E+05 & 0.739 \\ 
            DTLZ5  & $5.477 \times n^{0.766} \ln \left(\frac{X_{0}}{\varepsilon}\right)+1$ & 7.61E+02 & 3.65E+03 & 1.42E+04 & 0.572 \\ 
            DTLZ6  & $3.817 \times n^{1.361} \ln \left(\frac{X_{0}}{\varepsilon}\right)+1$ & 2.03E+03 & 7.92E+03 & 9.85E+04 & \textbf{-1.83} \\ 
            \bottomrule%第三条粗线
        \end{tabular}
        }
        \label{t2}
    \end{center}
\end{table*}

\captionsetup[table*]{labelformat=simple, labelsep=newline, textfont=sc}
\renewcommand\arraystretch{1.5}%保证每列高度是原先的1.5倍
\begin{table*}[!b]%星号表示双栏
    \caption{
        \footnotesize \MakeUppercase Comparison of Estimated and Numerical Experiment Results for Running Time of AR-MOEA.
    }
    \begin{center}
    \resizebox{0.8\textwidth}{!}{    
        \begin{tabular}{c c c c c c}%设定每列的宽度以及对齐方式，并且可以做到自动换行
            \toprule%第一条粗线
            ~~~~~~ & ~~~Estimated Running Time~~~ & ~~~St.D.~~~ & ~~~Mean~~~ & ~~~Estimation Results~~~ & ~~~Deviation($R^2$)~~~ \\
           % ~~~~~~ & ~~~IGD(+/-/=)~~~ & ~~~HV(+/-/=)~~~ \\ 
            \midrule%第二条粗线
            ZDT1   & $5.477 \times n^{0.727} \ln \left(\frac{X_{0}}{\varepsilon}\right)+1$ & 4.08E+03 & 1.37E+04 & 1.93E+04 & 0.351 \\ 
            ZDT2   & $30.000 \times n^{0.377} \ln \left(\frac{X_{0}}{\varepsilon}\right)+1$ & 2.79E+03 & 1.64E+4E & 4.63E+04 & 0.459 \\ 
            ZDT3   & $5.480 \times n^{0.934} \ln \left(\frac{X_{0}}{\varepsilon}\right)+1$ & 1.73E+04 & 2.40E+04 & 3.65E+04 & 0.618 \\
            ZDT4   & $30.000 \times n^{0.817} \ln \left(\frac{X_{0}}{\varepsilon}\right)+1$ & 5.51E+03 & 3.20E+04 & 1.75E+05 & 0.637 \\
            ZDT6   & $18.333 \times n^{1.488} \ln \left(\frac{X_{0}}{\varepsilon}\right)+1$ & 2.25E+03 & 1.93E+04 & 3.65E+05 & \textbf{-2.09} \\
            DTLZ1  & $23.636 \times n^{0.709} \ln \left(\frac{X_{0}}{\varepsilon}\right)+1$ & 6.82E+03 & 4.87E+04 & 1.20E+05 & 0.662 \\
            DTLZ2  & $5.477 \times n^{0.700} \ln \left(\frac{X_{0}}{\varepsilon}\right)+1$ & 6.54E+03 & 6.10E+03 & 8.36E+04 & 0.432 \\
            DTLZ3  & $30.000 \times n^{1.042} \ln \left(\frac{X_{0}}{\varepsilon}\right)+1$ &  9.08E+03 & 6.52E+04 & 4.47E+05 & 0.229 \\
            DTLZ5  & $5.451 \times n^{0.521} \ln \left(\frac{X_{0}}{\varepsilon}\right)+1$ & 5.61E+03 & 6.07E+03 & 8.40E+03 & 0.490 \\
            DTLZ6  & $26.450 \times n^{0.915} \ln \left(\frac{X_{0}}{\varepsilon}\right)+1$ & 1.97E+03 & 8.55E+03 & 2.17E+05 & \textbf{-1.25} \\ 
            \bottomrule%第三条粗线
        \end{tabular}
        }
        \label{t3}
    \end{center}
\end{table*}

\captionsetup[table*]{labelformat=simple, labelsep=newline, textfont=sc}
\renewcommand\arraystretch{1.5}%保证每列高度是原先的1.5倍
\begin{table*}[t]%星号表示双栏
    \caption{
        \footnotesize \MakeUppercase Comparison of Estimated and Numerical Experiment Results for Running Time of AGE-MOEA-\uppercase\expandafter{\romannumeral2}.
    }
    \begin{center}
    \resizebox{0.8\textwidth}{!}{    
        \begin{tabular}{c c c c c c}%设定每列的宽度以及对齐方式，并且可以做到自动换行
            \toprule%第一条粗线
            ~~~~~~ & ~~~Estimated Running Time~~~ & ~~~St.D.~~~ & ~~~Mean~~~ & ~~~Estimation Results~~~ & ~~~Deviation($R^2$)~~~ \\
           % ~~~~~~ & ~~~IGD(+/-/=)~~~ & ~~~HV(+/-/=)~~~ \\ 
            \midrule%第二条粗线
            ZDT1   & $5.477 \times n^{0.966} \ln \left(\frac{X_{0}}{\varepsilon}\right)+1$ & 4.18E+02 & 5.14E+03 & 3.64E+04 & 0.451 \\
            ZDT2   & $5.477 \times n^{1.096} \ln \left(\frac{X_{0}}{\varepsilon}\right)+1$ & 1.29E+03 & 6.42E+03 & 5.68E+04 & 0.253 \\
            ZDT3   & $5.477 \times n^{1.006} \ln \left(\frac{X_{0}}{\varepsilon}\right)+1$ & 3.77E+03 & 6.18E+03 & 3.90E+04 & 0.336 \\
            ZDT4   & $9.144 \times n^{1.275} \ln \left(\frac{X_{0}}{\varepsilon}\right)+1$ & 2.45E+03 & 1.67E+04 & 1.46E+05 & 0.753 \\
            ZDT6   & $9.709 \times n^{1.276} \ln \left(\frac{X_{0}}{\varepsilon}\right)+1$ & 1.33E+03 & 1.26E+04 & 1.15E+05 & 0.472 \\
            DTLZ1  & $18.418 \times n^{0.747} \ln \left(\frac{X_{0}}{\varepsilon}\right)+1$ & 8.47E+03 & 3.55E+04 & 1.15E+05 & 0.723 \\
            DTLZ2  & $5.441 \times n^{0.697} \ln \left(\frac{X_{0}}{\varepsilon}\right)+1$ & 3.91E+03 & 3.80E+03 & 8.31E+03 & 0.319 \\
            DTLZ3  & $10.694 \times n^{1.111} \ln \left(\frac{X_{0}}{\varepsilon}\right)+1$ & 9.24E+03 & 4.97E+04 & 1.94E+05 & 0.610 \\
            DTLZ5  & $5.477 \times n^{1.187} \ln \left(\frac{X_{0}}{\varepsilon}\right)+1$ & 5.03E+02 & 4.61E+03 & 4.75E+04 & \textbf{-0.910} \\
            DTLZ6  & $5.477 \times n^{1.319} \ln \left(\frac{X_{0}}{\varepsilon}\right)+1$ & 1.36E+03 & 7.63E+03 & 1.31E+05 & \textbf{-0.416} \\
            \bottomrule%第三条粗线
        \end{tabular}
        }
        \label{t4}
    \end{center}
\end{table*}

\captionsetup[table*]{labelformat=simple, labelsep=newline, textfont=sc}
\renewcommand\arraystretch{1.5}%保证每列高度是原先的1.5倍
\begin{table*}[!t]%星号表示双栏
    \caption{
        \footnotesize \MakeUppercase Comparison of Estimated and Numerical Experiment Results for Running Time of PREA.
    }
    \begin{center}
    \resizebox{0.8\textwidth}{!}{    
            \begin{tabular}{c c c c c c}%设定每列的宽度以及对齐方式，并且可以做到自动换行
            \toprule%第一条粗线
                ~~~~~~ & ~~~Estimated Running Time~~~ & ~~~St.D.~~~ & ~~~Mean~~~ & ~~~Estimation Results~~~ & ~~~Deviation($R^2$)~~~ \\
               % ~~~~~~ & ~~~IGD(+/-/=)~~~ & ~~~HV(+/-/=)~~~ \\ 
            \midrule%第二条粗线
            ZDT1   & $4.734 \times n^{1.046} \ln \left(\frac{X_{0}}{\varepsilon}\right)+1$ & 4.0765E+03 & 1.35E+04 & 3.88E+04 & 0.760 \\ 
            ZDT2   & $5.701 \times n^{1.103} \ln \left(\frac{X_{0}}{\varepsilon}\right)+1$ & 2.85E+03 & 1.73E+04 & 6.03E+04 & 0.410 \\ 
            ZDT3   & $5.477 \times n^{1.154} \ln \left(\frac{X_{0}}{\varepsilon}\right)+1$ & 1.87E+04 & 2.33E+04 & 5.71E+04 & 0.445 \\ 
            ZDT4   & $30.000 \times n^{0.719} \ln \left(\frac{X_{0}}{\varepsilon}\right)+1$ & 5.26E+03 & 3.20E+04 & 1.33E+05 & 0.768 \\ 
            ZDT6   & $10.166 \times n^{1.328} \ln \left(\frac{X_{0}}{\varepsilon}\right)+1$ & 2.02E+03 & 1.92E+04 & 1.38E+05 & 0.157 \\ 
            DTLZ1  & $9.186 \times n^{1.172} \ln \left(\frac{X_{0}}{\varepsilon}\right)+1$ & 8.19E+03 & 4.86E+04 & 1.79E+05 & 0.567 \\ 
            DTLZ2  & $5.477 \times n^{0.748} \ln \left(\frac{X_{0}}{\varepsilon}\right)+1$ & 6.07E+02 & 6.06E+03 & 9.79E+03 & 0.301 \\ 
            DTLZ3  & $30.00 \times n^{0.860} \ln \left(\frac{X_{0}}{\varepsilon}\right)+1$ & 1.03E+04 & 6.64E+04 & 2.77E+05 & 0.394 \\ 
            DTLZ5  & $5.477 \times n^{0.771} \ln \left(\frac{X_{0}}{\varepsilon}\right)+1$ & 8.52E+02 & 6.31E+03 & 1.65E+04 & 0.516 \\ 
            DTLZ6  & $27.902 \times n^{0.834} \ln \left(\frac{X_{0}}{\varepsilon}\right)+1$ & 2.20E+03 & 1.25E+04 & 1.825E+05 & \textbf{-0.349} \\ 
            \bottomrule%第三条粗线
            \end{tabular}
        }
        \label{t5}
    \end{center}
\end{table*}

The experimental study is designed to systematically validate three fundamental aspects of our proposed methodology:
\begin{enumerate}
	\item Demonstrate that our approach provides mathematically sound and practically meaningful upper bounds on the EFHT of MOEAs, where the estimated bounds consistently exceed or equal the empirically observed runtime performance across different problem-algorithm combinations.
	
	\item Establish that our framework operates effectively across diverse MOEA paradigms and problem characteristics without requiring algorithm-specific modifications or problem-dependent simplifications, thereby confirming its broad applicability in numerical multi-objective optimization.
	
	\item Verify that our adaptive sampling methodology produces consistent and reliable estimates of IGD gain distributions, ensuring the robustness of the overall estimation framework against the inherent stochasticity of evolutionary optimization processes.
\end{enumerate}

\subsection{Upper Bound Estimation on EFHT of MOEAs}
\label{chap5:1}
This subsection presents both theoretical and empirical validations of the proposed estimation approach, with results consistently supporting its reliability and effectiveness.

\subsubsection{Theoretical Validation}
It is worth noting that, to date, no suitable theoretical upper bounds exist for MOEAs in the context of continuous optimization problems. Therefore, to validate our estimation approach, we draw a comparison to the well-studied discrete domain. Specifically, in the binary OneMinMax problem \cite{6793344}, each bitstring $x \in \{0,1\}^n$ yields objectives $f_1 = |x|$ and $f_2 = n - |x|$, and it has been proven that NSGA-II requires $\mathcal{O}(n \log n)$ generations to reach the full Pareto front under standard settings \cite{bian2022running}.

To create a continuous analogue, we extend the OneMinMax problem by allowing each decision variable $x_i$ to take real values in $[0,1]$, and define the two objectives as $f_1 = \sum_{i=1}^n x_i$ and $f_2 = n - \sum_{i=1}^n x_i$. This formulation preserves the original problem’s structural symmetry while embedding it into a continuous, real-coded search space.

By applying the proposed estimation approach to the continuous version of OneMinMax, we empirically obtain an upper bound on EFHT for NSGA‑II given by
$7.699\times n^{1.484}\,\ln\!\bigl(\tfrac{X_0}{\varepsilon}\bigr)+1$. This bound exhibits an $\mathcal{O}(n^{1.484})$ growth rate, which exceeds the $\mathcal{O}(n\log n)$ result known for the discrete OneMinMax within a reasonable margin. Such an increase in exponent is expected because continuous optimization features an infinite decision space, which requires finer‑grained exploration, and thus more generations to converge. Importantly, both the discrete and continuous bounds remain low‑order polynomials, and their asymptotic trends align closely, providing strong evidence for the validity of our estimation approach.

\subsubsection{Empirical Validation}
Tables \uppercase\expandafter{\romannumeral1} to \uppercase\expandafter{\romannumeral5} present comprehensive estimation results for five representative MOEAs (NSGA-II, MOEA/D, AR-MOEA, AGEMOEA-II, and PREA) across ten benchmark problems from the ZDT and DTLZ test suites. Each table displays the estimated running time expressions derived through surface fitting, and their corresponding numerical validation results. The numerical experimental data (standard deviation and mean from 100 independent runs) and the computed estimation results are obtained using specific problem dimensions: $n=10$ for ZDT4 and ZDT6, and $n=15$ for all other problems, selected based on their distinct decision variable domain characteristics. The mathematical expressions follow the form established in Eq. (\ref{eq15}), where $n$ represents problem dimension, $\varepsilon$ is the target precision, and $X_0$ is the initial population's IGD value. We employ a modified coefficient of determination $R^2$ to evaluate estimation quality \cite{carpenter1960principles, slinker1990primer}:
\begin{equation}
	\label{cofdet}
	R^2 = 1 - \frac{\sum_{i=1}^{\kappa} \left( \lg(f_s(n_i, \psi_i)) - \lg(g_i) \right)^2}{\sum_{i=1}^{\kappa} \left( \lg(g_i) - \frac{1}{\kappa} \sum_{i=1}^{\kappa} \lg(g_i) \right)^2}
\end{equation}
where $f_s$ is the fitted surface function. An estimation is considered correct when the empirical average FHT does not exceed the estimated upper bound and $R^2 > 0$, indicating reliable surface fitting.

Different algorithms use varying target precisions $\varepsilon$ based on their optimization capabilities. For instance, on DTLZ1, we set $\varepsilon = 0.04$ for PREA and $\varepsilon = 0.05$ for MOEA/D, as MOEA/D requires significantly more generations to reach higher precision levels. Despite the moderate precision requirements, the achieved solution quality demonstrates high overlap with the true Pareto front as shown in Fig. \ref{fig_si}. Our method employs surface fitting to approximate the gain distribution function based on statistical experimental data, and by substituting this fitted function for the unknown function $h(z)$ in Theorem \ref{theorem:1}, we derive the upper bound on EFHT. Consequently, the tightness of the estimated upper bound is closely related to surface fitting accuracy.

\begin{figure}[!b]
	\centering
	\subfloat[]{\includegraphics[width=1.6in]{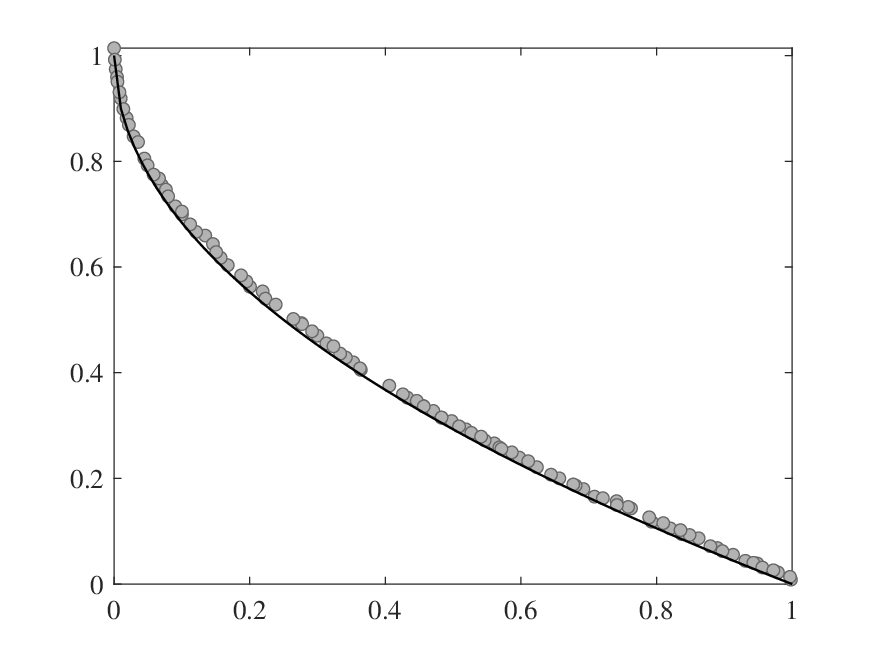}%
		\label{fig_first_c}}
	\hfil
	\subfloat[]{\includegraphics[width=1.6in]{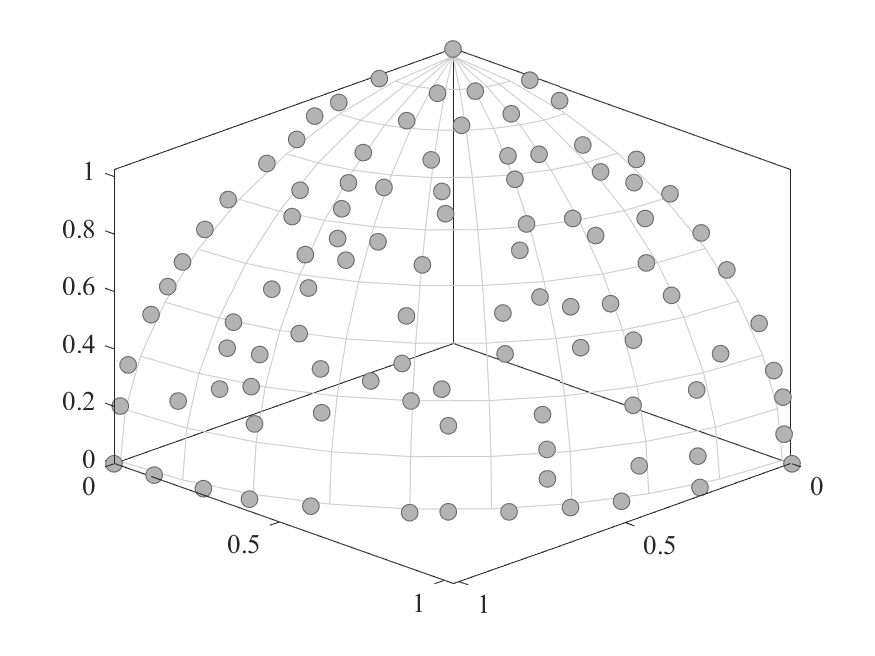}%
		\label{fig_second_c}}
	\caption{Optimization results. (a) NSGA-\uppercase\expandafter{\romannumeral2} on ZDT1. (b) PREA on DTLZ2.}
	\label{fig_si}
\end{figure}

From Tables \uppercase\expandafter{\romannumeral1} to \uppercase\expandafter{\romannumeral5}, we observe that all estimations with positive $R^2$ values are correct. For cases where $R^2 > 0$, the estimated values remain within reasonable magnitudes compared to numerical results, typically not exceeding them by more than one order of magnitude. Positive $R^2$ values indicate accurate surface fitting and lead to tighter estimated upper bounds on EFHT.
Conversely, negative $R^2$ values suggest significant discrepancies between fitted surfaces and actual data, resulting in excessively loose upper bounds. Such estimations significantly exceed numerical results and are excluded from practical consideration due to limited utility \cite{orelien2008fixed}.

Based on the $R^2$ criterion, we can effectively filter out unreliable estimates while retaining meaningful bounds. The results demonstrate that our approach successfully estimates upper bounds on EFHT across all three main MOEA categories: dominance-based, decomposition-based, and indicator-based methods. This broad validation confirms the generality of our proposed framework.

\begin{table*}[b]
    \centering
    \caption{Estimation Comparison of NSGA-II and PREA on Test Problems}
    \resizebox{0.9\textwidth}{!}{
    \begin{tabular}{|l|c|c|c|c|c|}
        \hline
        \multirow{2}{*}{\textbf{Fitness Function}} & \multicolumn{2}{c|}{\textbf{NSGA-II}} & \multicolumn{2}{c|}{\textbf{PREA}} & \multirow{2}{*}{\textbf{Comparison results (Winner)}} \\ \cline{2-5}
        & \textbf{Running Time} & \textbf{Value} & \textbf{Running Time} & \textbf{Value} & \\ \hline
        ZDT1 & $1.423 \times n^{1.234} \ln \left(\frac{X_0}{\epsilon}\right) + 1$  & 2.00E+4 & $4.734 \times n^{1.046} \ln \left(\frac{X_{0}}{\varepsilon}\right)+1$ & 3.88E+04 & NSGA-II \\ \hline
        ZDT2 & $4.637 \times n^{1.015} \ln \left(\frac{X_0}{\epsilon}\right) + 1$  & 4.00E+4 & $5.701 \times n^{1.103} \ln \left(\frac{X_{0}}{\varepsilon}\right)+1$ & 6.03E+04 & NSGA-II \\ \hline
        ZDT3 & $5.477 \times n^{0.929} \ln \left(\frac{X_0}{\epsilon}\right) + 1$ & 3.09E+04 & $5.477 \times n^{1.154} \ln \left(\frac{X_{0}}{\varepsilon}\right)+1$ & 5.71E+04 & NSGA-II \\ \hline
        ZDT4 & $30.000 \times n^{0.583} \ln \left(\frac{X_{0}}{\varepsilon}\right)+1$  & 9.95E+04 & $30.000 \times n^{0.719} \ln \left(\frac{X_{0}}{\varepsilon}\right)+1$ & 1.33E+05 & NSGA-II \\ \hline
        ZDT6 & $15.290 \times n^{1.045} \ln \left(\frac{X_{0}}{\varepsilon}\right)+1$  & 1.09E+05 & $10.166 \times n^{1.328} \ln \left(\frac{X_{0}}{\varepsilon}\right)+1$ & 1.38E+05 & NSGA-II \\ \hline
    \end{tabular}
    }
    \label{tab:comparison1}
\end{table*}

\begin{table*}[b]
    \centering
    \caption{Numerical Comparison of NSGA-II and PREA on Test Problems Conducted in This Study}
    \resizebox{0.9\textwidth}{!}{
    \begin{tabular}{|l|c|c|c|c|c|c|}
        \hline
        \multirow{2}{*}{\textbf{Test Problem}} & \multicolumn{2}{c|}{\textbf{NSGA-II}} & \multicolumn{2}{c|}{\textbf{PREA}} & \multirow{2}{*}{\textbf{Comparison results (Winner)}} & \multirow{2}{*}{\textbf{Consistency}} \\ \cline{2-5}
        & \textbf{Mean} & \textbf{St.D} & \textbf{Mean} & \textbf{St.D} & & \\ \hline
        ZDT1 & 5.78E+03 & 5.07E+02 & 1.35E+04 & 4.0765E+03 & NSGA-II & consistent \\ \hline
        ZDT2 & 6.70E+03 & 1.15E+03 & 1.73E+04 & 2.85E+03 & NSGA-II & consistent \\ \hline
        ZDT3 & 7.36E+03 & 6.57E+03 & 2.33E+04 & 1.87E+04 & NSGA-II & consistent \\ \hline
        ZDT4 & 1.84E+04 & 3.12E+03 & 3.20E+04 & 5.26E+03 & NSGA-II & consistent \\ \hline
        ZDT6 & 1.44E+04 & 1.49E+03 & 1.92E+04 & 2.02E+03 & NSGA-II & consistent \\ \hline
    \end{tabular}
    }
    \label{tab:comparison2}
\end{table*}

\subsection{Running Time Comparison of NSGA-II and PREA}
The proposed method enables practical performance comparison between different algorithms, providing valuable guidance for algorithm selection in engineering applications. Due to the varying performance of each algorithm, the achievable target precision also differs. To obtain more reliable estimation results, we set different target values for different algorithms in both the estimation experiments and the corresponding numerical experiments. This aims to maximize target precision while balancing computational time. Therefore, the previous experimental results cannot be directly used for running time comparisons. 

We conduct a comparative study using ZDT test problems to evaluate NSGA-II and PREA under identical conditions: population size of 100, problem dimension of 15, and target precision $\varepsilon = 0.01$. Each numerical experiment is repeated 100 times to ensure statistical robustness. The estimation accuracy is validated by verifying that the algorithm ranking from our estimation method matches the ranking obtained from numerical experiments.

Tables \ref{tab:comparison1} and \ref{tab:comparison2} demonstrate consistent results across all test problems, with NSGA-II consistently outperforming PREA in both estimated and empirical running times. This perfect consistency between estimation and numerical results validates the effectiveness of our proposed approach for runtime assessment and algorithm comparison in multi-objective optimization.

\subsection{Stability of the Estimation}

We demonstrate the stability of our estimation approach through repeated experiments. While sampling-based statistical methods inherently involve randomness, our adaptive sampling method effectively mitigates this variability, ensuring reliable estimation results.

To assess stability, we performed 30 repeated EFHT estimations for PREA on ZDT1 with consistent parameter settings. The coefficient of variation (CV), defined as the ratio of standard deviation to mean \cite{pearson1896vii}, serves as our stability metric:
\begin{equation}\label{eq16}
	CV = \frac{\sigma_0}{\mu}
\end{equation}
where $\sigma_0$ is the standard deviation and $\mu$ is the mean.

Figure \ref{fig_sim} illustrates four representative surface fitting results, with corresponding estimation expressions shown in Table \ref{tab1}. All expressions maintain the same structural form, with variations only in the coefficient and the exponent of $n$. Across 30 trials, we calculated CV values of 0.152 for the coefficient and 0.181 for the exponent, indicating acceptable variability levels. These results confirm the stability and reliability of our adaptive sampling method, validating its effectiveness in ensuring consistent estimation performance.

\begin{table}[h]
	\begin{center}
		\caption{Comparison of Four Replicated Experiments}
		\label{tab1}
		\begin{tabular}{| c | c |}
			\hline
			Experiment No. & Running Time \\ \hline
			1 & $4.496 \times n^{1.001} \ln \left(\frac{X_{0}}{\varepsilon}\right)+1$ \\ \hline
			2 & $5.434 \times n^{0.992} \ln \left(\frac{X_{0}}{\varepsilon}\right)+1$ \\ \hline
			3 & $5.477 \times n^{0.899} \ln \left(\frac{X_{0}}{\varepsilon}\right)+1$ \\ \hline
			4 & $5.477 \times n^{0.905} \ln \left(\frac{X_{0}}{\varepsilon}\right)+1$ \\ \hline
		\end{tabular}
	\end{center}
\end{table}

\begin{figure}[h]
	\centering
	\subfloat[]{\includegraphics[width=1.7in]{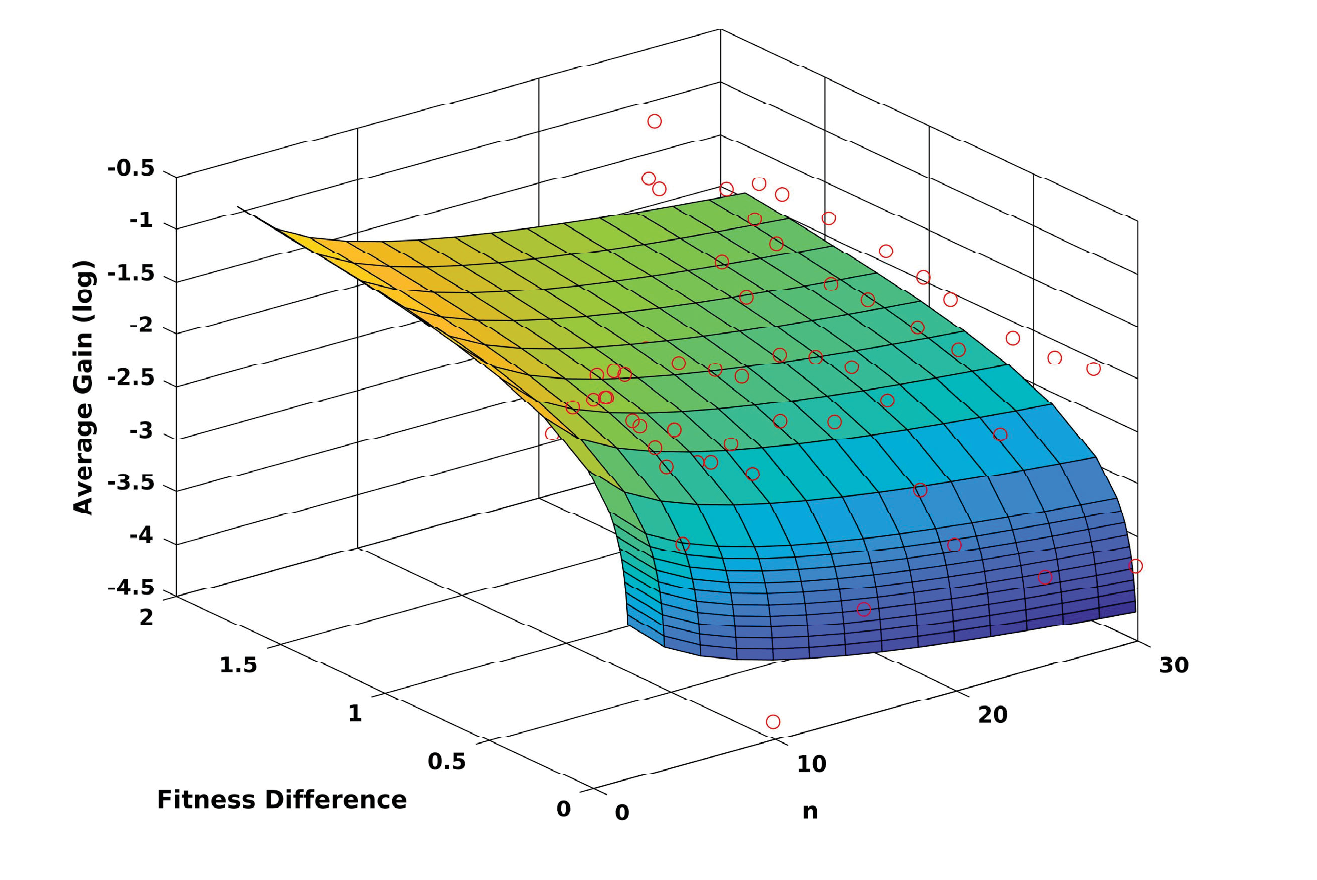}%
		\label{fig_first_case}}
	\hfil
	\subfloat[]{\includegraphics[width=1.7in]{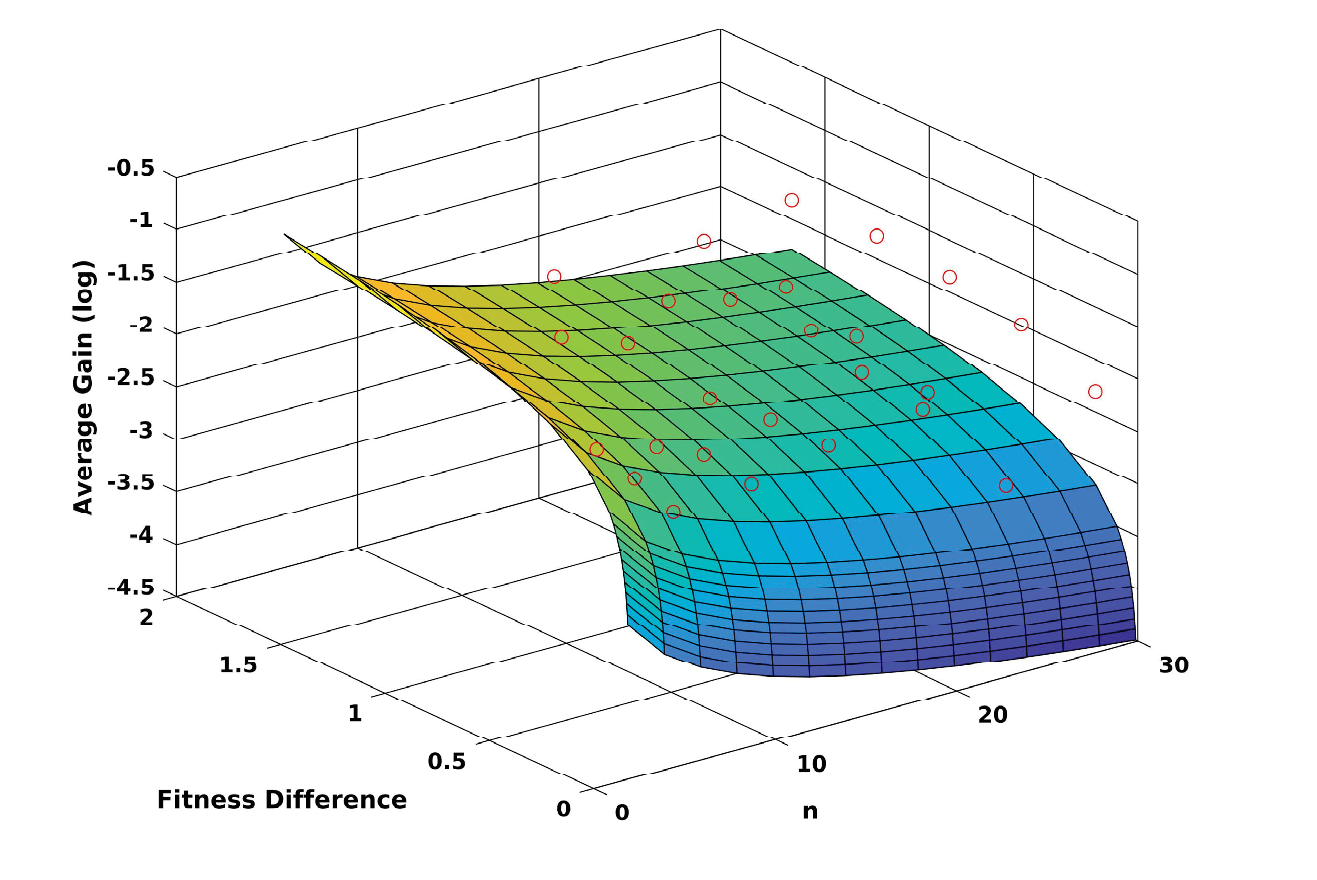}%
		\label{fig_second_case}}
	\hfil
	\subfloat[]{\includegraphics[width=1.7in]{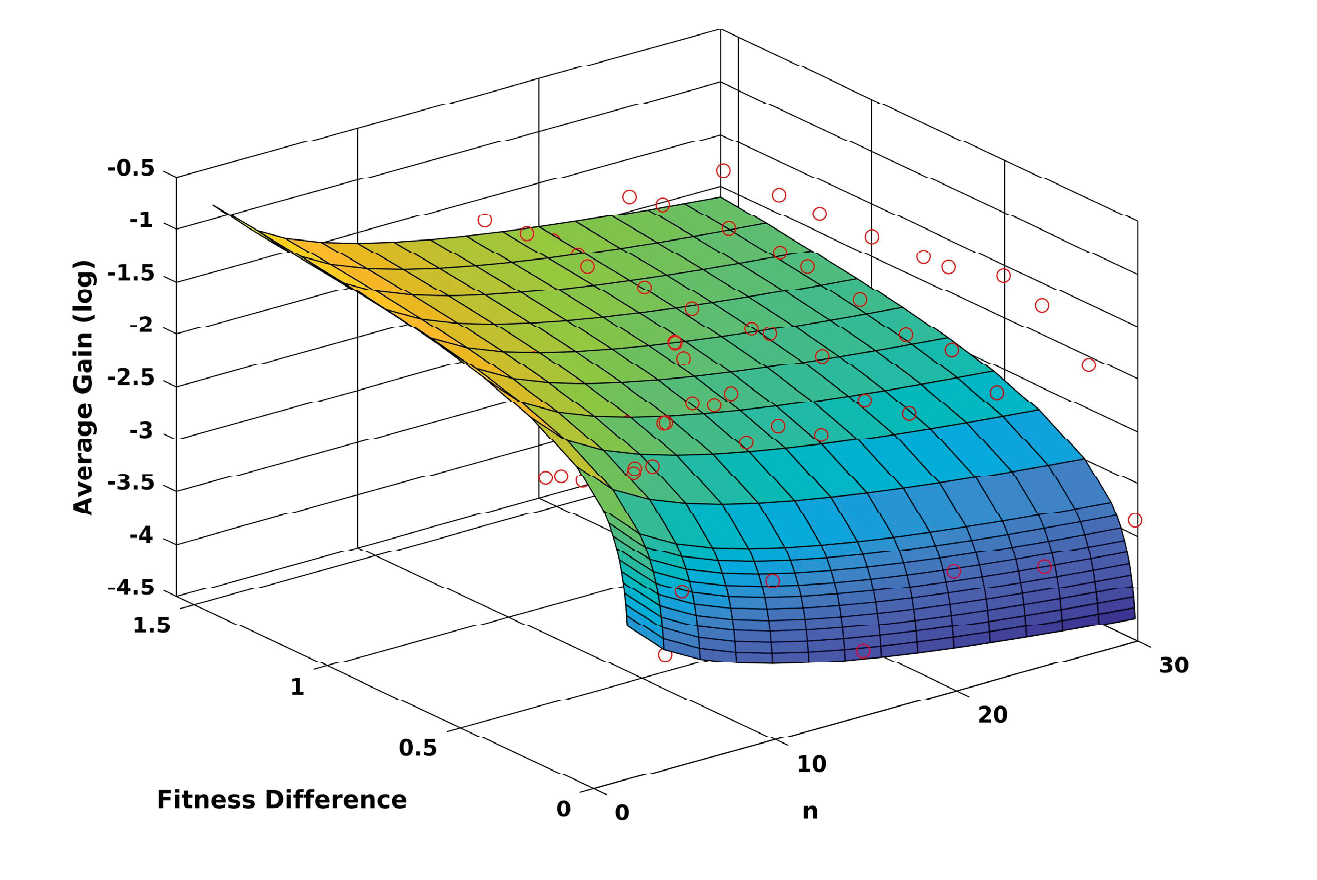}%
		\label{fig_third_case}}
	\hfil
	\subfloat[]{\includegraphics[width=1.7in]{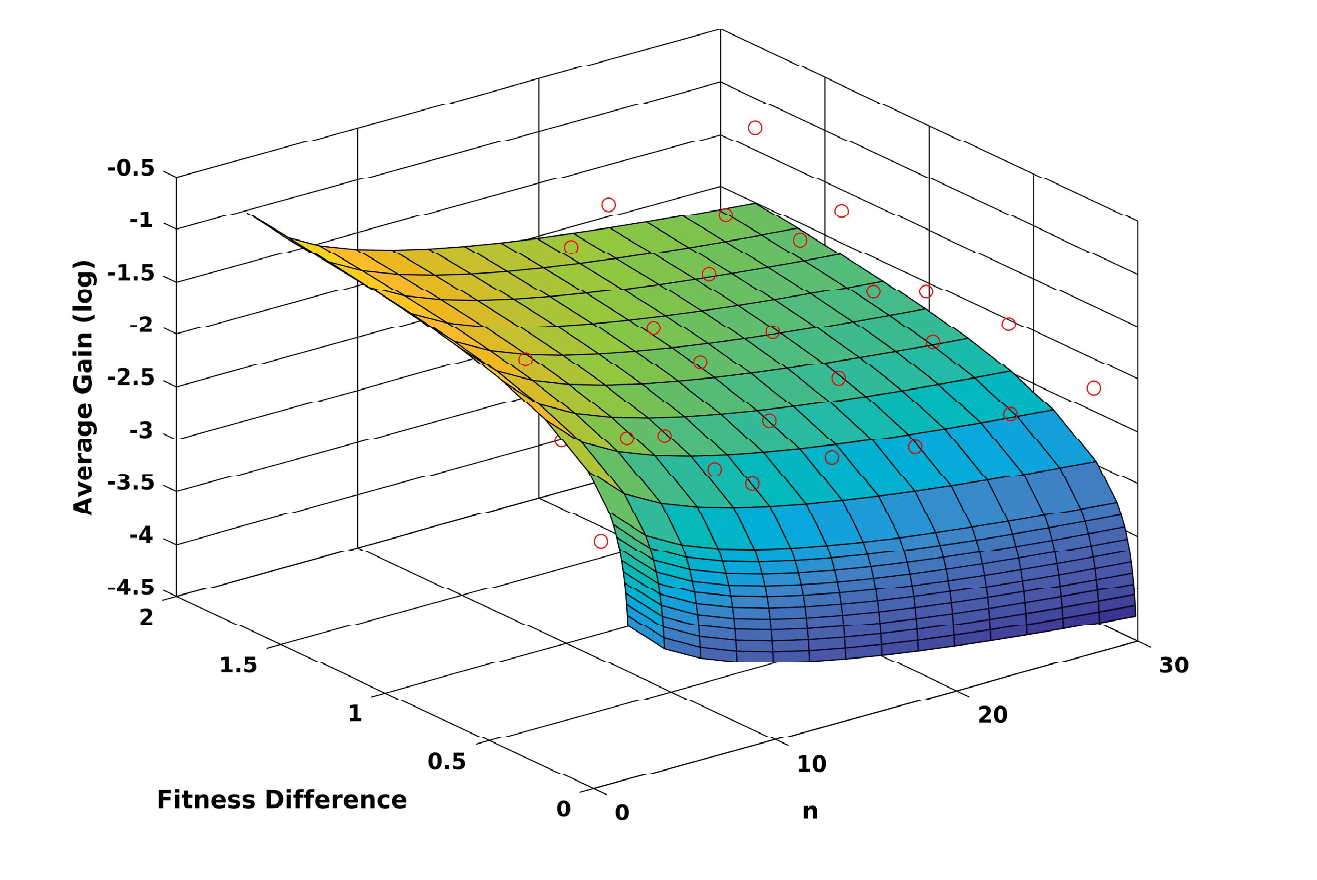}%
		\label{fig_fourth_case}}
	\caption{Surface fitting results of four replicated experiments. (a) No.1. (b) No.2. (c) No.3. (d) No.4.}
	\label{fig_sim}
\end{figure}

\section{Conclusion}\label{conclusion}
Most existing running-time analyses of MOEAs suffer from two primary limitations: dependence on algorithm-specific or problem-specific simplifications, and focus predominantly on combinatorial optimization problems. To address these limitations, we propose a general experimental framework for estimating upper bounds on the EFHT of MOEAs in numerical optimization without requiring simplifying assumptions about algorithms or problems. Our approach establishes an average gain model for MOEAs using the IGD metric as a progress measurement framework. Through statistical experiments, we empirically estimate the distribution functions of relevant random variables and derive upper bounds on EFHT based on drift analysis principles. An adaptive sampling method is introduced to enhance the stability and accuracy of the statistical estimation process.

Comprehensive experiments on ZDT and DTLZ benchmark suites with the five representative MOEAs validate the effectiveness of our approach. The results demonstrate successful upper bound estimation across different algorithmic paradigms including dominance-based, decomposition-based, and indicator-based methods, confirming the generality of our framework. Stability analysis through repeated experiments further validates the reliability of our adaptive sampling methodology.

The proposed framework provides a practical tool for runtime estimation in continuous multi-objective optimization, offering valuable guidance for algorithm selection and resource allocation in engineering applications. As a complement to theoretical analysis, it bridges the gap between rigorous mathematical approaches and practical implementation needs. Future work will extend the average gain theory to multi-task and multi-modal optimization contexts, where theoretical runtime analysis remains underdeveloped.
\bibliographystyle{IEEEtran}
\bibliography{references.bib}

\end{document}